\documentclass{article} %
\usepackage{iclr2026_conference,times}

\usepackage{amsmath,amsfonts,bm}

\def\eqref#1{equation~\ref{#1}}

\def\1{\bm{1}}

\DeclareMathAlphabet{\mathsfit}{\encodingdefault}{\sfdefault}{m}{sl}
\SetMathAlphabet{\mathsfit}{bold}{\encodingdefault}{\sfdefault}{bx}{n}

\newcommand{\E}{\mathbb{E}}

\newcommand{\R}{\mathbb{R}}

\newcommand{\KL}{D_{\mathrm{KL}}}

\DeclareMathOperator*{\argmin}{arg\,min}

\usepackage{amsmath,amssymb,mathtools}
\usepackage{amsthm}
\usepackage{booktabs}
\usepackage{enumitem}
\usepackage{xcolor}
\usepackage{tikz}
\usepackage{pgfplots}\pgfplotsset{compat=1.18}
\usepgfplotslibrary{groupplots,fillbetween}
\usetikzlibrary{positioning,calc}
\usepackage[noend]{algpseudocode}
\usepackage{algorithm}
\usepackage[most]{tcolorbox}
\usepackage{titletoc}
\usepackage{hyperref}
\usepackage{url}
\usepackage{fontawesome5}
\usepackage{aliascnt}
\usepackage[capitalize,noabbrev]{cleveref}
\hypersetup{hypertexnames=false} %

\iclrfinalcopy %

\definecolor{cteal}{RGB}{42,157,143}
\definecolor{cyellow}{RGB}{223,178,88}
\definecolor{cforce}{RGB}{214,73,51}

\definecolor{clink}{RGB}{36,133,122} %
\hypersetup{colorlinks=true, citecolor=clink, linkcolor=clink, urlcolor=clink}

\newcommand{\gauss}[3]{\mathcal{N}\!\left(#1;\,#2,\,#3\right)}
\newcommand{\kl}[2]{\KL\!\left(#1\,\|\,#2\right)}
\newcommand{\dt}{\mathrm{d}}
\newcommand{\piplus}{\pi^{\star}}
\newcommand{\obs}{s}                 %
\newcommand{\score}{\mathsf{s}}      %
\newcommand{\vel}{v}
\newcommand{\cs}{\kappa}              %
\newcommand{\velref}{\vel_{\mathrm{ref}}}
\newcommand{\scoreref}{\score_{\mathrm{ref}}}
\newcommand{\pref}{p_{\mathrm{ref}}}

\newcommand{\tauou}{\tau_{\mathrm{ou}}}
\newcommand{\Eint}[1]{\E_{X_1 \sim #1,\,\varepsilon}}       %
\newcommand{\Eintt}[1]{\E_{t,\,X_1 \sim #1,\,\varepsilon}}  %
\newcommand{\method}{FFM}

\theoremstyle{plain}
\newtheorem{theorem}{Theorem}
\newaliascnt{proposition}{theorem}
\newtheorem{proposition}[proposition]{Proposition}
\aliascntresetthe{proposition}
\newaliascnt{lemma}{theorem}

\aliascntresetthe{lemma}
\newaliascnt{corollary}{theorem}
\newtheorem{corollary}[corollary]{Corollary}
\aliascntresetthe{corollary}
\newtheorem*{theorem*}{Theorem}
\theoremstyle{remark}
\newaliascnt{remark}{theorem}

\aliascntresetthe{remark}
\newaliascnt{assumption}{theorem}

\aliascntresetthe{assumption}
\crefname{proposition}{Proposition}{Propositions}
\crefname{lemma}{Lemma}{Lemmas}
\crefname{corollary}{Corollary}{Corollaries}
\crefname{remark}{Remark}{Remarks}
\crefname{assumption}{Assumption}{Assumptions}
\newcommand{\thmref}[1]{Theorem~\ref{#1}}
\newcommand{\propref}[1]{Proposition~\ref{#1}}

\newcommand{\corref}[1]{Corollary~\ref{#1}}

\newtcolorbox{tealbox}{
  enhanced, breakable,
  colframe=cteal!85!black, colback=cteal!8,
  boxrule=0.9pt, arc=6pt, left=7pt, right=7pt, top=6pt, bottom=6pt,
  before skip=6pt, after skip=6pt}

\title{Reinforcement Learning \\ via Fixed-point Flow Matching}

\author{Denis Blessing\thanks{Corresponding author: \texttt{denis.blessing@kit.edu}} \quad Onur Celik \quad Gerhard Neumann \\
Karlsruhe Institute of Technology (KIT) \\
Autonomous Learning Robots Lab (ALR)}

\begin{document}

\maketitle
\fancyhead[L]{Preprint}   %
\begin{abstract}
In maximum entropy and behavior-constrained reinforcement learning (RL), the optimal policy is given by a Boltzmann distribution. Generating samples from this distribution can be challenging, as it is often complex and multimodal. While flow policies trained with flow matching provide a powerful framework for representing such distributions, they traditionally rely on access to training data, which is generally unavailable in online RL. To address this limitation, we propose \textit{Fixed-point Flow Matching (FFM)}, a data-free procedure for training and fine-tuning flow policies in RL. FFM leverages the target flow identity to formulate the actor update as a fixed-point iteration, utilizing a flow-matching-style regression objective where the critic's action gradients serve as regression targets. We show that the optimal policy is a fixed point of this iteration. Through extensive experiments, we demonstrate that FFM effectively fine-tunes pretrained flow policies in the offline-to-online setting and enables efficient and accurate training of flow policies from scratch in online RL. These results establish fixed-point flow matching as a principled approach for learning expressive flow-based policies without requiring training data.
\end{abstract}

\begin{center}
  \small
  \faGlobe\;\href{https://denisbless.github.io/FFM_website/}{Project page}
  \qquad
  \faGithub\;\href{https://github.com/DenisBless/FFM_RL}{Code}
\end{center}

\section{Introduction}
\label{sec:intro}

Across a range of reinforcement learning (RL) formulations, the improved policy is available in closed form as a Boltzmann distribution of the current critic. In maximum entropy RL, $\piplus(x \mid \obs) \propto \exp(Q(\obs,x)/\lambda)$ is optimal for the policy improvement step of soft policy iteration \citep{ziebart2008maximum,haarnoja2017sql,haarnoja2018sac}; in KL-regularized and behavior-constrained RL, which includes offline-to-online fine-tuning, the same expression appears tilted by a reference policy \citep{peters2010relative,abdolmaleki2018mpo,peng2019awr,nair2020awac}. This target distribution can be complex and multimodal whenever several distinct action choices are viable, but the classical actor--critic recipe projects it onto a Gaussian due to the simplicity of the resulting optimization problem and the tractability of the policy's log-likelihood \citep{haarnoja2018sac}. Flow and diffusion models \citep{lipman2023flow,liu2023rectified,albergo2023building,ho2020ddpm,song2021sde} remove this representational restriction and have become the policy class of choice wherever a dataset is available \citep{chi2023diffusionpolicy,reuss2023goal,janner2022diffuser,park2025fql}. Yet, the stable training objectives that are used to train these models are not directly usable for online RL due to a lack of samples from $\piplus$.
Existing methods for training flow and diffusion models in the on-policy RL setting often require backpropagating the critic through the integrator \citep{wang2023diffusionql,ding2024consistency,wang2024dacer,zhang2025sacflow, celik2025dime, trudi2025}, whose memory cost grows with every integration step and quickly becomes prohibitive, or rely on importance-weighted regression \citep{ding2024qvpo,ma2025dpmd,dong2025maxentdp}, whose exponential critic weights have high variance and lead to unstable training updates. Moreover, methods for fine-tuning flow models typically use stochastic optimal control, which requires converting the flow into a diffusion model in order to use the adjoint method \citep{domingoenrich2025adjoint,li2026qam,dong2026trust}. The adjoint method stores each sampled trajectory and integrates an adjoint state backward along it, leading to complex and expensive update schemes.

\begin{figure}[t]
\centering
\resizebox{\linewidth}{!}{\input{figures/fig_method.tex}}
\caption{\textbf{The \method{} policy-improvement step.} \method{} repeats:
\textbf{(1)} Clean actions $X_1$ are generated from the current policy by integrating \cref{eq:ode} (forward only, no gradients); the grey curves are sampling trajectories.
\textbf{(2)} Critic action gradients are required for the \method{} regression target of \cref{eq:fp-tfm}.
\textbf{(3)} Fresh noise $\varepsilon$ and the clean action $X_1$ are used to sample a noised action $X_t$ via \cref{eq:interpolant}.
\textbf{(4)} A flow-matching-style regression pulls $\vel_\theta$ toward the \method{} target. The optimal policy $\piplus \propto e^{Q/\lambda}$ is a fixed point of this loop.}
\label{fig:method}
\end{figure}

In this work we ask whether a flow policy can be trained against a Boltzmann target with the same simple regression that makes flow matching stable, but without samples from the target. Our starting point is the \emph{target flow identity}: for a Boltzmann target, the velocity field that generates the noised target is, up to a rescaling of the noisy action, a conditional average of the critic's action gradients $\nabla_x Q(\obs,x)$ at clean actions. It replaces the data-derived regression target of flow matching by a critic-derived one, and the resulting objective, \emph{target flow matching}, has the same minimizer as flow matching while reading its targets off the critic instead of a dataset. Since the objective still averages over samples of the target, we evaluate it with samples of the current policy and iterate. The resulting \emph{fixed-point flow matching} (\method{}) update, illustrated in \cref{fig:method}, generates actions with the current flow, labels them with critic gradients, noises them, and regresses the velocity field onto the labels. We show that the optimal policy is a fixed point of this iteration. The whole procedure is conceptually simple, easy to implement and requires neither backpropagation through the integrator nor importance weights or conversion into a diffusion model.

\paragraph{Contributions.}
\begin{itemize}[leftmargin=*,itemsep=2pt,topsep=2pt]
  \item \textbf{The target flow identity and target flow matching} (\cref{sec:tfm}). We derive an identity that expresses the velocity field of a Boltzmann target as a conditional expectation of the gradient of its log-density, and turn it into target flow matching, an alternative training objective to flow matching that shares its population minimizer but regresses onto the gradient of the target rather than onto the displacement of the data.
  \item \textbf{Fixed-point flow matching} (\cref{sec:tfm}). We propose a fixed-point iteration that evaluates the target flow matching objective with samples of the current policy instead of the optimal one, and prove that the optimal policy is a fixed point of this iteration (\thmref{thm:fixedpoint}).
  \item \textbf{Extensions} (\cref{sec:online}). We further introduce several extensions for flow policies: a trust-region loss in velocity space, a loss for behavior-constrained fine-tuning that reads the reference score off the pretrained flow, and correlated exploration through temporally correlated prior draws.
  \item \textbf{Two RL instantiations} (\cref{sec:online}). We instantiate \method{} for maximum entropy RL and for offline-to-online fine-tuning. We validate both across $50$ offline-to-online tasks and $24$ online tasks (\cref{sec:experiments}), where \method{} is competitive with or better than the strongest available baselines.
\end{itemize}

\section{Related work}
\label{sec:rw}
\paragraph{Reinforcement learning with diffusion and flow policies.}
Diffusion and flow models entered RL through imitation learning and offline RL, where a dataset makes their standard training objectives applicable and the critic enters through weighting, guidance, or gradients through the sampling chain \citep{janner2022diffuser,chi2023diffusionpolicy,wang2023diffusionql,chen2023sfbc,hansenestruch2023idql,lu2023cep,kang2023edp,chen2024srpo,ding2024consistency,park2025fql}. In online RL there are no samples of the target policy, and existing methods differ in how the critic drives the actor update. One line backpropagates the critic value through the sampling chain \citep{wang2023diffusionql,ding2024consistency,wang2024dacer,zhang2025sacflow}, which is memory-intensive and unstable for many integration steps. A second line uses critic-weighted regression, interpreting the denoising loss as a variational bound or as a mirror-descent step \citep{ding2024qvpo,ma2025dpmd,dong2025maxentdp}, and a third estimates policy gradients through the denoising process, typically under a trust region \citep{ren2025dppo,mcallister2026flow,zhang2025reinflow,trudi2025,lv2026flac}. Closest to \method{} are methods that use the action gradient of the critic: DIPO \citep{yang2023dipo} clones actions improved by gradient ascent on $Q$; QSM \citep{psenka2024qsm} regresses the score onto $\nabla_x Q$ evaluated at the noisy action, a plug-in approximation of the target score identity that is exact only at the clean end; and RFM \citep{li2026reverse} estimates the exact posterior-mean target by self-normalized importance sampling, using Langevin--Stein control variates to interpolate between $Q$-weighted noise averages and $Q$-weighted averages of $\nabla_x Q$, which allows it to train flow policies as well. Maximum entropy RL with generative policies additionally requires an entropy estimate: DACER \citep{wang2024dacer} fits a Gaussian mixture, DIME \citep{celik2025dime} and MaxEntDP \citep{dong2025maxentdp} use a lower bound and numerical integration, respectively, \citet{sanokowski2025damdp} augment the MDP with the denoising steps, FLAG \citep{kim2026flag} uses importance sampling in the latent space of the flow, and AMDP \citep{amdp2025} amortizes the entropy-regularized update with adjoint matching. \method{} obtains the entropy from the divergence of the velocity field it trains, using the instantaneous change of variables.

\paragraph{Fine-tuning and offline-to-online RL.}
Reward fine-tuning of pretrained diffusion and flow models has been cast as policy gradient over the denoising chain \citep{fan2023dpok,black2024ddpo,liu2025flowgrpo} and as entropy-regularized stochastic optimal control \citep{uehara2024finetuning,domingoenrich2025adjoint}, with the KL divergence to the pretrained model as regularizer; \citet{uehara2025inference} review the corresponding inference-time methods. The same behavior-tilted target governs offline-to-online RL with flow policies. FQL \citep{park2025fql} distills the flow into a one-step policy that is trained by reparameterized critic gradients, DSRL \citep{wagenmaker2025dsrl} keeps the flow fixed and learns a policy over its latent noise, ReinFlow and FINO \citep{zhang2025reinflow,shin2026fino} inject noise into the flow to obtain tractable likelihoods and exploration, and Q-Flow \citep{doo2026qflow} propagates terminal values to intermediate states of the flow. QAM \citep{li2026qam} imports adjoint matching: the pretrained flow is converted into a memoryless SDE and critic gradients are transported backward along sampled trajectories by the lean adjoint ODE, and TRQAM \citep{dong2026trust} stabilizes it with a path-space trust region. Concurrent work regresses velocity corrections onto critic gradients \citep{wang2026qvgm,shi2026flowdpg,ziakas2026fmq} or steers a fixed flow policy at test time \citep{zhou2026qgf,ruan2026qpilots}. \method{} fine-tunes the pretrained flow directly in velocity space, requires neither SDE conversion nor adjoint simulation, and its behavior-tilted target has the KL-regularized optimum as a fixed point (\cref{app:o2o}).

Beyond the RL literature, \method{} shares the fixed-point structure of the matching objectives used to train diffusion and flow models from unnormalized densities, and of adjoint matching; \cref{app:extended-rw} discusses these methods, together with further critic-driven policy updates.

\section{Preliminaries}
\label{sec:prelim}

\paragraph{Reinforcement learning.}
We consider a Markov decision process with states $\obs \in \mathcal{S}$, continuous actions $x \in \mathcal{A} \subseteq \R^{d}$, transitions $P(\obs' \mid \obs, x)$, reward $r(\obs,x)$, and discount $\gamma \in [0,1)$.
The state-action value function of the current policy $\pi$ is the expected discounted return $Q^{\pi}(\obs,x) = \E_{\pi}\big[\sum_{k\ge 0}\gamma^{k} r(\obs_k,x_k) \mid \obs_0 = \obs,\, x_0 = x\big]$ when taking action $x$ in state $\obs$ and following $\pi$ thereafter.
Maximizing this return alone drives the policy towards a deterministic optimum, which often suffers from premature convergence. The two settings we consider therefore both regularize the policy improvement step, which leads to closed-form solutions that are known only up to an intractable normalization constant.

In \emph{behavior-constrained} RL \citep{peters2010relative,abdolmaleki2018mpo,peng2019awr,nair2020awac}, which includes offline-to-online fine-tuning, the improvement step is penalized for deviating from a reference policy $\pref$, typically a policy pretrained on the offline dataset. Maximizing $\E_{x \sim \pi}[Q(\obs,x)] - \lambda\,\kl{\pi(\cdot \mid \obs)}{\pref(\cdot \mid \obs)}$ over all policies yields the \emph{behavior-tilted} optimum
\begin{equation}
    \piplus(x \mid \obs) \;=\; \frac{\pref(x \mid \obs)\exp\!\big(Q(\obs,x)/\lambda\big)}{\mathcal Z(\obs)},
    \qquad
    \mathcal Z(\obs) = \int_{\mathcal{A}} \pref(x \mid \obs)\exp\!\big(Q(\obs,x)/\lambda\big)\,\dt x,
    \label{eq:behavior-boltzman}
\end{equation}
where $\lambda > 0$ trades off the penalty for deviating from $\pref$ against maximizing the expected return.

In \emph{maximum entropy} RL, the reference is dropped and the rewards are instead augmented by the entropy of the current policy $\mathcal{H}(\pi(\cdot\mid \obs)) = -\E_{x \sim \pi}[\log \pi(x \mid \obs)]$, which leads to the soft state-action value function $Q^{\pi}(\obs,x) = \E_{\pi}\big[\sum_{k\ge 0}\gamma^{k} \big(r(\obs_k,x_k) + \lambda\, \mathcal{H}(\pi(\cdot \mid \obs_k))\big) \mid \obs_0 = \obs,\, x_0 = x\big]$. The improvement step of soft policy iteration is the \emph{Boltzmann policy} \citep{ziebart2008maximum,haarnoja2017sql,levine2018rlai,haarnoja2018sac}
\begin{equation}
  \piplus(x \mid \obs) \;=\; \frac{\exp\!\big(Q(\obs,x)/\lambda\big)}{\mathcal Z(\obs)},
  \qquad
  \mathcal Z(\obs) = \int_{\mathcal{A}} \exp\!\big(Q(\obs,x)/\lambda\big)\,\dt x,
  \label{eq:boltzmann}
\end{equation}
with a temperature $\lambda > 0$, which now trades off maximizing the expected return against the policy's stochasticity.

In both cases, the normalization constant $\mathcal Z(\obs)$ is intractable, and generating samples from the potentially complex and multimodal target $\piplus$ requires fitting a tractable policy parameterization to its density. \method{} optimizes a flow-based policy to this end.

\paragraph{Flow policies from flow matching.}
Assume for now that we have access to samples $X_1 \sim \piplus(\cdot \mid \obs)$ from the optimal policy. By drawing independent Gaussian noise $\varepsilon \sim \gauss{\cdot}{0}{I}$, one can construct a stochastic interpolant \citep{albergo2023stochastic} as
\begin{equation}
  X_t \;=\; \alpha(t)\, X_1 \;+\; \beta(t)\, \varepsilon,
  \qquad t \in [0,1],
  \label{eq:interpolant}
\end{equation}
with smooth schedules satisfying $\alpha(0)=0$, $\alpha(1)=1$, $\beta(0)=\sigma_0$, $\beta(1)=0$ (e.g.\ $\alpha(t)=t$, $\beta(t)=\sigma_0(1-t)$). Hence, $X_0 \sim \gauss{\cdot}{0}{\sigma_0^2 I}$ is pure noise, $X_1$ is a clean action from $\piplus(\cdot \mid \obs)$, and $X_t$ in between is a partially noised action with marginal density $\rho_t$. The interpolant defines the \textit{velocity field}
\begin{equation}
  \vel(x,t \mid \obs) \;:=\; \Eint{\piplus}\big[\dot{X}_t \,\big|\, X_t = x\big],
  \qquad \dot X_t = \dot\alpha(t)\, X_1 + \dot\beta(t)\,\varepsilon,
  \label{eq:velocity}
\end{equation}
which represents the mean direction in which interpolating actions move. Here and in the following, the subscript of the expectation lists the random variables of the interpolant it is taken over: $\Eint{\piplus}$ is the expectation over the pair $(X_1, \varepsilon)$ of \cref{eq:interpolant} with clean actions $X_1 \sim \piplus(\cdot \mid \obs)$. Conditioning on $X_t = x$ hence averages over all pairs $(X_1, \varepsilon)$ that are consistent with the noised action $x$ at time $t$. Integrating the ordinary differential equation (ODE)
\begin{equation}
  \dt X_t \;=\; \vel(X_t,t \mid \obs)\,\dt t , \quad X_0 \sim \gauss{\cdot}{0}{\sigma_0^2 I},
  \label{eq:ode}
\end{equation}
transports the prior along the marginals $\rho_t$, so that the terminal samples $X_1$ are distributed according to $\piplus(\cdot \mid \obs)$ \citep{albergo2023stochastic}.
To learn the velocity, one can use \emph{flow matching} \citep{lipman2023flow,liu2023rectified,albergo2023building}, which provides a simple and scalable regression objective, i.e.,
\begin{equation}
  \mathcal{L}_{\mathrm{FM}}(\theta) \;=\; \Eintt{\piplus}\big[\,\big\| \vel_\theta(X_t, t \mid \obs) - \dot X_t \big\|^2\,\big],
  \label{eq:fm}
\end{equation}
where the expectation is additionally taken over $t \sim \mathcal{U}(0,1)$. Optimizing \cref{eq:fm}, however, requires samples from the optimal policy, which are not available in online RL. Instead, one typically has access to a critic $Q$, and in the next section we leverage it to obtain a similar regression objective that can be optimized without samples from $\piplus$.

\section{Fixed-point target flow matching}
\label{sec:tfm}
Now assume we are given a critic function $Q$ with $\piplus(\cdot \mid \obs) \propto \exp(Q(\obs,\cdot)/\lambda)$ instead of samples $X_1$. Furthermore, assume that the critic is differentiable, allowing us to compute the \textit{$Q$-score}
\begin{equation}
  \nabla_x \log \piplus(x \mid \obs) \;=\; \nabla_x Q(\obs,x)/\lambda,
  \label{eq:terminal-score}
\end{equation}
which does not require knowledge of the intractable normalization constant in \cref{eq:boltzmann}. We arrive at the proposed method in two steps. First, we introduce the \textit{target flow identity}, which expresses the velocity as a conditional expectation, similar to flow matching, but in terms of the $Q$-score. The corresponding objective, \textit{target flow matching}, still requires samples $X_1 \sim \piplus$. This is addressed in the second step, where we introduce \textit{fixed-point flow matching (\method{})}, which uses samples of the current policy instead of the optimal one, resulting in a fixed-point iteration with theoretical guarantees.

\paragraph{The target flow identity.} Consider the interpolant in \cref{eq:interpolant} and define
\begin{equation}
  \cs(t) \;:=\; \tfrac{\dot\alpha(t)}{\alpha(t)}\beta^2(t) - \dot\beta(t)\beta(t) \;>\; 0 \quad \text{on } (0,1),
  \label{eq:cs}
\end{equation}
which recurs throughout the paper. The following proposition provides an alternative to flow matching (see \cref{eq:fm}) for representing the velocity field.

\begin{tealbox}
\begin{proposition}[Target flow identity]
\label{prop:tfi}
Let $\piplus(\cdot \mid \obs) \propto \exp(Q(\obs,\cdot)/\lambda)$ with $\mathcal Z(\obs) < \infty$, let $Q(\obs,\cdot)$ be continuously differentiable with $\E_{X_1 \sim \piplus}\big[\|\nabla_x Q(\obs,X_1)\|\big] < \infty$, and let $\vel$ be the velocity field \cref{eq:velocity} of the interpolant \cref{eq:interpolant} with $X_1 \sim \piplus(\cdot \mid \obs)$. Then, for all $t \in (0,1]$,
\begin{equation}
  \vel(x,t \mid \obs)
  \;=\; \tfrac{\dot\alpha(t)}{\alpha(t)}\, x \;+\; \tfrac{\cs(t)}{\lambda\,\alpha(t)}\,
  \Eint{\piplus}\big[\nabla_x Q(\obs, X_1) \,\big|\, X_t = x\big].
  \label{eq:tfi}
\end{equation}
\end{proposition}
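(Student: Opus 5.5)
The plan is to eliminate $\varepsilon$ from the velocity, write the result via Tweedie's formula in terms of the score of $\rho_t$, and then move the score to the clean end by an integration by parts under the posterior.

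\textbf{Step 1: eliminate $\varepsilon$.} For $t\in(0,1]$ we have $\alpha(t)>0$, so the interpolant \cref{eq:interpolant} gives $X_1 = (X_t-\beta\varepsilon)/\alpha$. Substituting into $\dot X_t$ yields
\[
\dot X_t = \tfrac{\dot\alpha}{\alpha}X_t + \big(\dot\beta - \tfrac{\dot\alpha}{\alpha}\beta\big)\varepsilon = \tfrac{\dot\alpha}{\alpha}X_t - \tfrac{\cs}{\beta}\varepsilon .
\]
Conditioning on $X_t=x$ in \cref{eq:velocity} gives
\[
\vel(x,t\mid\obs)=\tfrac{\dot\alpha}{\alpha}x - \tfrac{\cs(t)}{\beta(t)}\,\E[\varepsilon\mid X_t=x].
\]
At $t=1$ we have $\cs(1)=0$ and $\beta(1)=0$, so that endpoint is treated separately. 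There the identity reduces to $\vel = \tfrac{\dot\alpha}{\alpha}x$ plus the appropriate limit, which I would verify directly from $X_t=X_1$. The remaining task is therefore to show, for $t\in(0,1)$,
\[
-\tfrac{1}{\beta}\E[\varepsilon\mid X_t=x] = \tfrac{1}{\lambda\alpha}\E[\nabla_x Q(\obs,X_1)\mid X_t=x].
\]

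\textbf{Step 2: Tweedie's formula.} Since $\rho_t(x)=\int \piplus(y)\,\gauss{x}{\alpha y}{\beta^2 I}\,\dt y$ is smooth for $\beta>0$, differentiating under the integral gives
\[
\nabla\log\rho_t(x) = -\tfrac{1}{\beta}\E[\varepsilon\mid X_t=x].
\]

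\textbf{Step 3: target score identity.} Write the Gaussian kernel as a function of $y$: $\nabla_x \mathcal N(x;\alpha y,\beta^2 I) = -\tfrac{1}{\alpha}\nabla_y \mathcal N(x;\alpha y,\beta^2 I)$. Then
\[
\nabla\rho_t(x) = -\tfrac1\alpha\int \piplus(y)\nabla_y \mathcal N\,\dt y = \tfrac1\alpha\int \nabla_y\piplus(y)\,\mathcal N\,\dt y,
\]
and $\nabla_y\piplus = \piplus\,\nabla_y Q/\lambda$ by \cref{eq:terminal-score}. Dividing by $\rho_t$ gives
\[
\nabla\log\rho_t(x)=\tfrac{1}{\lambda\alpha}\E[\nabla_x Q(\obs,X_1)\mid X_t=x],
\]
which combined with Steps 1 and 2 yields \cref{eq:tfi}.

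\textbf{Main obstacle.} The delicate point is justifying the integration by parts, i.e.\ that the boundary terms vanish and that $\nabla_y(\piplus\mathcal N)$ is integrable. Integrability follows from the hypothesis $\E\|\nabla Q\|<\infty$ together with the boundedness of $\mathcal N$, plus integrability of $\piplus\nabla_y\mathcal N$, which holds because the Gaussian kernel has bounded gradient. For the boundary terms, I would first try the standard argument that if $f=\piplus\mathcal N$ is $C^1$ with $f,\nabla f\in L^1(\R^d)$, then $\int \nabla f=0$: approximate with a cutoff $\chi_R$, for which $\int\nabla(\chi_R f)=0$, and let $R\to\infty$ using dominated convergence, noting $\|\nabla\chi_R\|\le C/R$. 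If $\mathcal A$ is a proper subset of $\R^d$, I would assume $Q$ is defined on all of $\R^d$, or that $\piplus$ vanishes at the boundary.
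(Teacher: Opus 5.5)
Your proof is correct and takes essentially the paper's route: the paper combines the velocity--score bridge (from Tweedie's noise identity, as in your Steps 1--2) with the target score identity of De Bortoli et al., which rests on the same integration by parts you carry out in Step 3, and your cutoff argument supplies the justification the paper leaves implicit. At $t=1$ no limit is needed: $\kappa(1)=0$ and $\varepsilon$ is independent of $X_1=X_t$, so $v(x,1)=\dot\alpha(1)\,x$, which equals the right-hand side directly.
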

\end{tealbox}

Intuitively, the velocity field that generates the Boltzmann policy is, up to a rescaling of the current noisy action, a conditional average of $Q$-scores at clean actions. The proof of~\propref{prop:tfi} is given in \cref{app:proofs} and uses the fact that the velocity can be expressed in terms of the score for Gaussian source distributions in combination with the target score identity \citep{debortoli2024target}.
Note that the factor $1/\alpha(t)$ in \cref{eq:tfi} diverges as $t \to 0$, where $X_t$ is almost pure noise and the $Q$-score at the clean action $X_1$ carries little information about it. The conditional expectation itself remains finite, but a regression target built from \cref{eq:tfi} has large variance near the noise end. In \cref{app:targets} we show that mixing the target with the flow-matching target using a variance-optimal control variate removes this singularity.

Because the $L^2$-optimal predictor is the conditional expectation, \cref{eq:tfi} can directly be turned into a regression objective as stated below.

\begin{tealbox}
\begin{corollary}[Target flow matching]
\label{cor:tfm}
Draw $t \sim \mathcal{U}(0,1)$, $X_1 \sim \piplus(\cdot \mid \obs)$, $\varepsilon \sim \gauss{\cdot}{0}{I}$ and form $X_t$ by \cref{eq:interpolant}, and let $w$ be a time-dependent weighting with $w(t) > 0$ on $(0,1)$. The \emph{target flow matching} objective
\begin{equation}
  \mathcal{L}_{\mathrm{TFM}}(\theta)
  \;=\; \Eintt{\piplus}\Big[w(t)\,\big\|\vel_\theta(X_t, t \mid \obs) - \tfrac{\dot\alpha(t)}{\alpha(t)}\, X_t - \tfrac{\cs(t)}{\lambda\,\alpha(t)}\, \nabla_x Q(\obs, X_1) \big\|^2\Big]
  \label{eq:tfm-objective}
\end{equation}
shares its population minimizer with flow matching, $\mathcal{L}_{\mathrm{FM}}$ of \cref{eq:fm}.
\end{corollary}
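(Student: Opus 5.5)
The argument is the standard one: an $L^2$ regression loss is minimized by the conditional expectation of its target, and Proposition~\ref{prop:tfi} identifies that conditional expectation with $\vel$. Throughout we use the conditions of Proposition~\ref{prop:tfi}, and additionally assume both losses are finite at the minimizer. Specifically, we assume $\E\|\dot X_t\|^2<\infty$ and $\int_0^1 w(t)\,\E\big\|\tfrac{\dot\alpha}{\alpha}X_t+\tfrac{\cs}{\lambda\alpha}\nabla_x Q(\obs,X_1)\big\|^2\,\dt t<\infty$. Population minimizers are understood over all measurable $\vel_\theta$, and they are identified up to $\rho_t\,\dt t$-null sets.

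\textbf{Step 1 (FM minimizer).} Write $Y^{\mathrm{FM}}_t:=\dot X_t$. Condition on $(t,X_t)$. For each $t$ the usual bias--variance decomposition gives
\begin{equation*}
\E\big\|\vel_\theta(X_t,t\mid\obs)-Y^{\mathrm{FM}}_t\big\|^2=\E\big\|\vel_\theta(X_t,t\mid\obs)-\E[Y^{\mathrm{FM}}_t\mid X_t]\big\|^2+\E\big\|Y^{\mathrm{FM}}_t-\E[Y^{\mathrm{FM}}_t\mid X_t]\big\|^2 .
\end{equation*}
The cross term vanishes by the tower property, and the second term does not depend on $\theta$. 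By \cref{eq:velocity}, $\E[Y^{\mathrm{FM}}_t\mid X_t=x]=\vel(x,t\mid\obs)$. Integrating over $t\sim\mathcal U(0,1)$, the minimizer of $\mathcal L_{\mathrm{FM}}$ is therefore $\vel$ itself, uniquely $\rho_t\,\dt t$-a.e.

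\textbf{Step 2 (TFM minimizer).} Now set $Y^{\mathrm{TFM}}_t:=\tfrac{\dot\alpha(t)}{\alpha(t)}X_t+\tfrac{\cs(t)}{\lambda\alpha(t)}\nabla_x Q(\obs,X_1)$. The same decomposition applies, now weighted by $w(t)$. Since $w(t)>0$ is a scalar factor that is constant given $t$, it does not change the pointwise minimizer in $\vel_\theta(\cdot,t)$ for each $t\in(0,1)$. It only rescales the nonnegative excess term $w(t)\,\E\|\vel_\theta-\E[Y^{\mathrm{TFM}}_t\mid X_t]\|^2$. This excess term is zero iff $\vel_\theta(\cdot,t)=\E[Y^{\mathrm{TFM}}_t\mid X_t=\cdot]$ for a.e.\ $t$, so the minimizer is $\E[Y^{\mathrm{TFM}}_t\mid X_t=x]$.

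\textbf{Step 3 (identification).} Because $X_t$ is $\sigma(X_t)$-measurable, we can pull it out of the conditional expectation:
\begin{equation*}
\E[Y^{\mathrm{TFM}}_t\mid X_t=x]=\tfrac{\dot\alpha(t)}{\alpha(t)}x+\tfrac{\cs(t)}{\lambda\alpha(t)}\,\Eint{\piplus}\big[\nabla_x Q(\obs,X_1)\mid X_t=x\big].
\end{equation*}
This is well defined because $\E\|\nabla_x Q(\obs,X_1)\|<\infty$ by assumption. By Proposition~\ref{prop:tfi}, the right-hand side equals $\vel(x,t\mid\obs)$ for every $t\in(0,1]$. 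The single point $t=0$, where $1/\alpha$ blows up, has Lebesgue measure zero and does not affect the minimizer. Hence both objectives are minimized by the same field $\vel$.

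\textbf{Main obstacle.} The only delicate point is integrability. Minimizing an $L^2$ objective over the conditional mean requires $Y^{\mathrm{TFM}}_t$ to be square integrable in the $w$-weighted sense. The factor $1/\alpha(t)$ diverges as $t\to0$, so $\mathcal L_{\mathrm{TFM}}$ could be infinite for every $\vel_\theta$ unless $w$ decays fast enough; this is exactly why the finiteness assumption above is stated explicitly. For example, $w(t)\propto\alpha(t)^2$ works if $\E\|\nabla_x Q(\obs,X_1)\|^2<\infty$. Under this assumption, the cross term vanishes and Steps 1--3 go through. Without it, the claim should instead be read pointwise in $t$: for each fixed $t\in(0,1)$, the minimizers of the per-$t$ losses coincide.
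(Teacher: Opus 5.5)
Your proof is correct and is essentially the paper's argument. Both losses are least-squares regressions minimized by the conditional mean of the target given $(X_t,t)$, and the positive weight $w(t)$ leaves that pointwise minimizer unchanged. Pulling $\tfrac{\dot\alpha(t)}{\alpha(t)}X_t$ out of the conditional expectation and applying \propref{prop:tfi} then identifies the TFM conditional mean with $\vel$. Your pointwise-in-$t$ fallback also matches how the paper's full statement of \thmref{thm:fixedpoint} poses the regression.

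One side remark is wrong, though: $w\propto\alpha^2$ does not satisfy your finiteness assumption. The reason is that $\cs$ itself blows up at the noise end; for the linear schedule, $\cs(t)=\sigma_0^2(1-t)/t$. The critic-gradient coefficient is therefore of order $1/t^2$, and $w(t)\,\cs^2(t)/(\lambda^2\alpha^2(t))=\cs^2(t)/\lambda^2\sim\sigma_0^4/(\lambda^2t^2)$ is not integrable at $0$. A weight like $w\propto\alpha^4$ is what is actually needed.
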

\end{tealbox}
In contrast to flow matching, whose target $\dot X_t$ encodes where the data moves, the target of target flow matching encodes where the critic points. However, the main obstacle remains: the expectation in \cref{eq:tfm-objective} still requires samples $X_1 \sim \piplus$.

\paragraph{Fixed-point flow matching.}
While we cannot sample from $\piplus$, we can sample $X_1 \sim \pi_{\theta}(\cdot \mid \obs)$, where $\pi_{\theta}$ denotes the terminal density at $t=1$ obtained by integrating \cref{eq:ode} with the learned velocity field $\vel_{\theta}$ in place of $\vel$. In other words, we can sample from the current policy instead of the optimal one. \method{} leverages this fact by building the interpolant in \cref{eq:tfm-objective} from the policy's own samples. To this end, consider the loss where the interpolant is built from clean actions of an arbitrary flow policy $\pi_{\bar\theta}$ with parameters $\bar\theta$, that is,
\begin{equation}
    \mathcal{L}_{\mathrm{FFM}}(\theta,\bar\theta) = \Eintt{\pi_{\bar\theta}}\Big[w(t)\,\big\|\vel_\theta(X_t, t \mid \obs) - \tfrac{\dot\alpha(t)}{\alpha(t)}\, X_t - \tfrac{\cs(t)}{\lambda\,\alpha(t)}\, \nabla_x Q(\obs, X_1) \big\|^2\Big],
  \label{eq:fp-tfm-loss}
\end{equation}
where the expectation is again over the interpolant \cref{eq:interpolant}, now with $X_1 \sim \pi_{\bar\theta}(\cdot \mid \obs)$ instead of $X_1 \sim \piplus(\cdot \mid \obs)$. Given some initial parameters $\theta_{0}$, \method{} computes the updates
\begin{equation}
  \theta_{k+1}
  \;=\; \argmin_\theta\; \mathcal{L}_{\mathrm{FFM}}(\theta, \theta_{k}),
  \label{eq:fp-tfm}
\end{equation}
which has the optimal policy as a fixed point, as stated below.

\begin{theorem}[Fixed point]
\label{thm:fixedpoint} 
Consider the iteration \cref{eq:fp-tfm} in its population form, that is, with the regression solved exactly over all velocity fields. Under regularity conditions on the critic $Q$ that we state in \cref{app:proofs}, if the current policy is optimal, $\pi_{\theta_k}(\cdot \mid \obs) = \piplus(\cdot \mid \obs)$, then the update returns the velocity field \cref{eq:velocity} of $\piplus$, so that $\pi_{\theta_{k+1}}(\cdot \mid \obs) = \piplus(\cdot \mid \obs)$ and every subsequent update returns the same velocity field. The optimal policy is therefore a fixed point of \method{}.
\end{theorem}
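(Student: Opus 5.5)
The plan is to reduce the theorem to \propref{prop:tfi} via the standard characterization of $L^2$ minimizers as conditional expectations. Assume $\pi_{\theta_k}(\cdot\mid\obs)=\piplus(\cdot\mid\obs)$. Then the interpolant in $\mathcal{L}_{\mathrm{FFM}}(\theta,\theta_k)$ of \cref{eq:fp-tfm-loss} is built from $X_1\sim\piplus$, so $\mathcal{L}_{\mathrm{FFM}}(\theta,\theta_k)=\mathcal{L}_{\mathrm{TFM}}(\theta)$ of \cref{eq:tfm-objective}. The work is therefore to identify the population minimizer of $\mathcal{L}_{\mathrm{TFM}}$ and to show that its flow reproduces $\piplus$.

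\textbf{Step 1 (pointwise minimizer).} Because the minimization is over all velocity fields, I will fix $t\in(0,1)$ and $x$ and minimize the inner conditional risk. Since $w(t)>0$, this is a weighted least-squares problem. Its unique minimizer, up to $\rho_t\otimes\mathrm{d}t$-null sets, is the conditional mean of the target,
\[
\vel_{\theta_{k+1}}(x,t\mid\obs)=\tfrac{\dot\alpha(t)}{\alpha(t)}x+\tfrac{\cs(t)}{\lambda\alpha(t)}\Eint{\piplus}\big[\nabla_x Q(\obs,X_1)\,\big|\,X_t=x\big].
\]
This requires the target to be square-integrable under the interpolant law, which I will impose as part of the regularity conditions. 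The condition $\E\|\nabla_xQ\|<\infty$ from \propref{prop:tfi} only guarantees that the conditional expectation exists. For the objective to be finite I need the slightly stronger assumption $\E_{\piplus}\|\nabla_xQ\|^2<\infty$, together with integrability of $(\cs/\alpha)^2 w$ on compact subintervals of $(0,1)$. At this step I can alternatively invoke \corref{cor:tfm} directly.

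\textbf{Step 2 (identification).} By \propref{prop:tfi}, the right-hand side above equals $\vel(x,t\mid\obs)$, the velocity field of \cref{eq:velocity} for $\piplus$. The minimizer therefore agrees with $\vel$ for almost every $(x,t)$ with respect to $\rho_t(x)\,\mathrm{d}t$.

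\textbf{Step 3 (pushforward and iteration).} Integrating \cref{eq:ode} with $\vel$ transports $\gauss{\cdot}{0}{\sigma_0^2I}$ along $\rho_t$ to $\rho_1=\piplus$, following \citet{albergo2023stochastic}. Hence $\pi_{\theta_{k+1}}=\piplus$. Applying the same argument inductively shows that every later iterate returns the same $\vel$, so $\piplus$ is a fixed point.

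\textbf{Main obstacle.} The delicate point is Step 3. An almost-everywhere equality of velocity fields with respect to $\rho_t\,\mathrm{d}t$ does not automatically determine the ODE flow. I plan to handle this through the continuity equation rather than trajectories. The interpolant marginals $\rho_t$ solve $\partial_t\rho_t+\nabla\cdot(\rho_t\vel)=0$ weakly, and this weak form depends on $\vel$ only $\rho_t$-a.e. I will then add the regularity assumptions needed for uniqueness of solutions: local Lipschitz continuity of $\vel$ in $x$ and a linear growth bound, for example via smoothness and tail conditions on $Q$. Under these conditions the law of $X_1$ generated by the learned field coincides with $\rho_1=\piplus$. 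The singularity of $1/\alpha(t)$ at $t=0$ also needs care. I will treat it by noting that near $t=0$ the field $\vel$ is that of an essentially Gaussian interpolant and remains bounded, since \propref{prop:tfi} is only an alternative representation of the finite field \cref{eq:velocity}. Integrability should therefore hold on $[0,1]$.
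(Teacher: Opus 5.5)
Your proposal is correct and follows the paper's proof. With $\pi_{\theta_k}=\piplus$ the FFM loss coincides with $\mathcal{L}_{\mathrm{TFM}}$, whose population minimizer is the conditional mean of the target and hence $\vel$ by \propref{prop:tfi} (this is exactly \corref{cor:tfm}); integrating \cref{eq:ode} with $\vel$ then returns $\piplus$, and induction gives the fixed point. Your handling of the $\rho_t\,\mathrm{d}t$-a.e.\ ambiguity through the continuity equation, and the added Lipschitz and growth assumptions for well-posedness, go beyond the paper, which simply cites the stochastic-interpolant transport result and assumes finite second moments of $\piplus$ together with $\E_{\piplus}\|\nabla_x Q\|^2<\infty$.
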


The proof is given in \cref{app:proofs}. In practice, \method{} repeats the following steps. First, clean actions $X_1 \sim \pi_{\theta_k}(\cdot \mid \obs)$ are generated by integrating \cref{eq:ode} with the current velocity field (forward passes only, no gradients). Next, the $Q$-scores $\nabla_x Q(\obs, X_1)$ are computed, which can be stored in a replay buffer for sample reuse. Then, fresh noise $\varepsilon$ and times $t$ are drawn to form the noised actions $X_t$, on which the regression loss \cref{eq:fp-tfm-loss} is minimized; see \cref{fig:method} for an illustration.

\section{Applications in reinforcement learning}
\label{sec:online}

\paragraph{Offline-to-online RL.}
In offline-to-online RL, one commonly pre-trains a flow policy $\velref$ by flow matching \cref{eq:fm} and a critic by an offline method on a fixed dataset. The policy is then fine-tuned online, where it must stay close to the pretrained flow while it improves, since offline critics are reliable only near the data. This motivates the behavior-constrained objective of \cref{sec:prelim}. Prior work for fine-tuning flow policies typically converts the flow to a diffusion model with a specific (memoryless) noise schedule and leverages stochastic optimal control to compute adjoint states, which yield a stable regression objective for training \citep{domingoenrich2025adjoint,li2026qam,dong2026trust}. In contrast, \method{} requires neither a conversion to a diffusion model nor the computation of adjoint states.

Keeping the policy close to the pretrained policy is commonly realized by considering the behavior-tilted target policy $\piplus \propto \pref\,e^{Q/\lambda}$, where $\pref$ is the terminal distribution of the pretrained flow policy \citep{uehara2024finetuning,domingoenrich2025adjoint}. Note that \method{} only requires the score $\scoreref(x) \coloneqq \nabla_x \log \pref(x \mid \obs)$ of the reference policy, which can be read off the pretrained velocity field via the velocity--score bridge (see \cref{app:interpolants}),
\begin{equation}
  \scoreref(x) \;=\; \big(\velref(x,t \mid \obs) - \tfrac{\dot\alpha(t)}{\alpha(t)}\,x\big)\big/\cs(t) \qquad (t \to 1).
  \label{eq:refscore-bridge}
\end{equation}
Similar to \citet{shin2026efficient}, \method{} can then directly be used for behavior-constrained fine-tuning by considering
\begin{equation}
  \mathcal{L}_{\mathrm{BT-FFM}}(\theta, \bar\theta) = \Eintt{\pi_{\bar\theta}}\Big[\big\|\vel_\theta(X_t, t \mid \obs) - \tfrac{\dot\alpha(t)}{\alpha(t)}\, X_t - \tfrac{\cs(t)}{\alpha(t)}\, g(X_1)\big\|^2\Big],
  \label{eq:o2o-target}
\end{equation}
with $g(x) := \scoreref(x) + \tfrac{1}{\lambda}\nabla_x Q(\obs, x)$ the score of the behavior-tilted target, in place of $\mathcal{L}_{\mathrm{FFM}}$ in the fixed-point iteration \cref{eq:fp-tfm}. By the argument of \thmref{thm:fixedpoint} (\cref{app:proofs}), the behavior-tilted policy $\piplus \propto \pref\,e^{Q/\lambda}$ is a fixed point of the resulting iteration.

\paragraph{Maximum entropy reinforcement learning.} The optimal policy in maximum entropy RL, as outlined in \Cref{sec:prelim}, is given by $\piplus(x \mid \obs) \propto{\exp\!\big(Q(\obs,x)/\lambda\big)}$. Hence, we can directly use \method{} to realize the policy improvement step. However, computing the soft $Q$-function requires the entropy of the current flow policy, $\mathcal{H}(\pi_{\theta}(\cdot \mid \obs)) = - \E_{X_1 \sim \pi_{\theta}}[\log \pi_\theta(X_1 \mid \obs)]$. To that end, we use the \emph{instantaneous change of variables} \citep{chen2018neuralode} to compute the log-density of a generated action along its own sampling trajectory as
\begin{equation}
  \log \pi_\theta(X_1 \mid \obs)
  \;=\; \log \gauss{X_0}{0}{\sigma_0^2 I} \;-\; \int_0^1 \nabla_x\!\cdot \vel_\theta(X_t, t \mid \obs)\,\dt t,
  \label{eq:entropy}
\end{equation}
and estimate the entropy by averaging over generated actions. However, the integral in \cref{eq:entropy} requires the divergence of the velocity field in each integration step, which can be computationally expensive. Therefore, we follow common practice \citep{grathwohl2019ffjord} and rely on Hutchinson's trace estimator \citep{hutchinson1990trace,skilling1989eigenvalues}.
More specifically, the estimator computes the divergence as the expectation $\nabla_x\!\cdot \vel = \E_{z}\big[z^{\top}(\partial_x \vel)\,z\big]$ over random probes $z$ with $\E[zz^\top] = I$.
The cost is one vector--Jacobian product per integration step, and no auxiliary density network is needed. For details, including an algorithmic description, we refer to \cref{app:maxent}.

\paragraph{Trust-region constraints.} Recently, on-policy maximum entropy RL has gained popularity, in particular when using massively parallel simulators. For on-policy RL it is well known that trust-region methods stabilize policy optimization and prevent premature convergence \citep{peters2010relative, schulman2015trpo, schulman2017ppo,abdolmaleki2015model, otto2021differentiable, akrour2019projections, reppo2025,trudi2025,amdp2025}. Similar to \citet{amdp2025}, we employ a trust-region loss that keeps the velocity close to the previous one via
\begin{equation}
  \mathcal{L}_{\mathrm{TR}}(\theta, \bar\theta)
  \;=\; \int_0^1 \, \Eint{\pi_{\bar\theta}}\Big[\big\|\vel_{\theta}(X_t,t\mid\obs) - \vel_{\bar\theta}(X_t,t\mid\obs)\big\|^2\Big]\,\dt t.
  \label{eq:klflow}
\end{equation}
\Cref{eq:klflow} is then used to constrain the fixed-point update \cref{eq:fp-tfm} to stay close to the previous flow policy with parameters $\theta_k$, i.e.,
\begin{equation}
\label{eq:tr_ffm}
    \theta_{k+1} = \argmin_{\theta}\  \mathcal{L}_{\mathrm{FFM}}(\theta, \theta_k) \quad \mathrm{s.t.} \quad  \mathcal{L}_{\mathrm{TR}}(\theta, \theta_k) \leq \varepsilon_{\mathrm{TR}}.
\end{equation}
The constrained problem in \cref{eq:tr_ffm} can be approached by the method of Lagrange multipliers \citep{vandenberghe2004convex}. Introducing the multiplier $\nu \ge 0$, the Lagrangian reads
\begin{equation}
  \mathcal{L}_{\mathrm{Lag}}(\theta, \nu; \theta_k)
  \;=\; \mathcal{L}_{\mathrm{FFM}}(\theta, \theta_k) \;+\; \nu\,\big(\mathcal{L}_{\mathrm{TR}}(\theta, \theta_k) - \varepsilon_{\mathrm{TR}}\big),
  \label{eq:lagrangian}
\end{equation}
and \cref{eq:tr_ffm} is replaced by the saddle-point problem
\begin{equation}
  \theta_{k+1} \;=\; \argmin_{\theta}\, \max_{\nu \ge 0}\; \mathcal{L}_{\mathrm{Lag}}(\theta, \nu; \theta_k).
  \label{eq:tr_minmax}
\end{equation}
We solve \cref{eq:tr_minmax} by alternating stochastic gradient descent on $\theta$ with gradient ascent on $\nu$, such that the multiplier grows while the constraint is violated and decays once it is satisfied. We refer to \cref{app:maxent,app:o2o} for the instantiations with the previous policy and with the pretrained flow, respectively, including the dual update.

\paragraph{Correlated exploration.} A flow policy is a deterministic function of its prior draw, and hence, integrating \cref{eq:ode} from $X_0 = \sigma_0\varepsilon$ yields the action $x = \Psi_\theta(\varepsilon \mid \obs)$, whose only source of randomness is $\varepsilon \sim \gauss{\cdot}{0}{I}$. Here, $\Psi_\theta(\cdot \mid \obs)$ is simply the sampler of the flow policy: the map that takes a prior draw $\varepsilon$ and returns the clean action obtained by integrating the ODE with the learned velocity field. Hence, exploration can also be shaped by correlating the prior draws across environment steps. Similar to colored-noise exploration for Gaussian policies \citep{eberhard2023pink}, we draw the priors within an episode from a stationary Ornstein--Uhlenbeck chain with correlation time $\tauou$, i.e.,
\begin{equation}
  \varepsilon_{k+1} \;=\; e^{-1/\tauou}\,\varepsilon_{k} \;+\; \sqrt{1 - e^{-2/\tauou}}\; \xi_{k},
  \qquad \xi_{k} \sim \gauss{\cdot}{0}{I},
  \qquad \varepsilon_{0} \sim \gauss{\cdot}{0}{I},
  \label{eq:ou}
\end{equation}
and execute $x_k = \Psi_\theta(\varepsilon_k \mid \obs_k)$. Since the chain in \cref{eq:ou} is stationary, every $\varepsilon_k$ is marginally distributed according to $\gauss{\cdot}{0}{I}$, and every executed action remains a sample of $\pi_\theta(\cdot \mid \obs_k)$ (see \cref{app:corr-expl}). Therefore, the \method{} regression \cref{eq:fp-tfm-loss}, the entropy estimate \cref{eq:entropy}, and the soft Bellman backup \cref{eq:soft-bellman} are unchanged, and so is the fixed point $\piplus$ of \thmref{thm:fixedpoint}; only the states the agent visits change. Instead of resampling the prior at every step, the agent commits to a direction for roughly $\tauou$ steps, which produces the sustained, directed behavior that sparse-reward tasks require before any reward is observed. We use it only for the hard-exploration tasks of \cref{app:online-results}. Other colored noise processes, such as the pink noise of \citet{eberhard2023pink}, can be used as well, which we leave to future work.

\section{Experiments}
\label{sec:experiments}

We evaluate \method{} in two settings: offline-to-online fine-tuning, and online on-policy maximum entropy RL, where we train a flow policy from scratch. 
For the offline-to-online setting, we focus on the ten OGBench \citep{park2025ogbench} locomotion and manipulation domains totaling 50 tasks. 
For the online RL setting, we consider the GPU-accelerated MJX \citep{mujoco_playground_2025} Playground and humanoid tasks, which result in 24 tasks in total.
We restrict ourselves to presenting the main results in this section, and refer the reader to \cref{app:o2o} and to \cref{app:maxent} for additional results and ablations, such as varying the number of integration steps, the interpolation schedules, or different trust region constraints.

\paragraph{Offline-to-online RL.}
We pretrain a behavior-cloning flow policy and a critic offline (to $1$M gradient steps) and then fine-tune online (to $1.5$M), against TRQAM \citep{dong2026trust}, QAM (+QAM-E) \citep{li2026qam}, FQL (+IFQL) \citep{park2025fql}, DSRL \citep{wagenmaker2025dsrl}, and CGQL-L \citep{dong2026trust} under an identical protocol (\cref{fig:curves-agg}). \method{}'s offline phase is deliberately conservative, keeping the policy inside the pretrained flow's support instead of chasing an offline critic that is unreliable off-support, so its offline average ($0.63$) trails the strongest offline baseline (TRQAM, $0.68$; \cref{tab:offline}). However, once the online phase starts and the interaction makes the critic trustworthy, the budget is relaxed and \method{} improves rapidly and attains the best aggregate success at $1.5$M steps ($0.95$ mean vs.\ $0.93$ for the next-best method; \cref{tab:online}), with the widest margins on the hard, long-horizon tasks (e.g.\ HumanoidMaze-Large, $0.99$ vs.\ $0.76$) and without a single divergence across seeds.

\begin{figure}[t]\centering
  \resizebox{\textwidth}{!}{\input{figures/curves_aggregated_tikz.tex}}
  \caption{\textbf{Per-domain aggregated offline-to-online success rate} (mean over the five tasks
  and all seeds). Offline ($\le\!1$M) and online ($>\!1$M) phases are shaded; \method{}
  (ours) is drawn in bold with a $\pm1$ standard-deviation band. Per-task curves and full tables are in \cref{app:exp}.}
  \label{fig:curves-agg}
\end{figure}

\paragraph{Online RL.}
We additionally train \method{} in the online on-policy RL setting where the policy is trained from scratch.
Concretely, we train \method{} on the 20 MuJoCo Playground DMC \citep{mujoco_playground_2025} tasks and on the four MuJoCo Playground humanoid tasks that require precise high-dimensional control.
We compare \method{}'s performance against state-of-the-art Gaussian policy representations (REPPO \citep{reppo2025}, PPO \citep{schulman2017ppo}, SPO \citep{xie2025simple}), diffusion policy representations (TRUDI \citep{trudi2025}, AMDP \citep{amdp2025}, DIME \citep{celik2025dime}, DPPO \citep{ren2025dppo}) and flow policy representations (FPO \citep{mcallister2026flow}).
Following \citet{agarwal2021deep}, every learning curve represents the aggregated interquartile mean (IQM) with $95\%$ confidence intervals from a stratified bootstrap for every task in \cref{fig:dmc-grid} for the Playground tasks and in \cref{fig:humanoid-grid} for the humanoid tasks, where every task is run on $10$ seeds. 
While \method{} performs comparably on the lower-dimensional Playground tasks (\cref{fig:dmc-aggregate} left two panels and \cref{fig:dmc-grid}), \method{} converges faster to the highest performance alongside TRUDI on the more sophisticated, high-dimensional humanoid tasks.
This indicates that \method{}'s benefits are most visible in high-dimensional tasks.
We further integrated correlated exploration into \method{} and ran experiments on the Playground tasks. 
The resulting learning curves are shown in \cref{fig:dmc-aggregate} left two panels and \cref{fig:dmc-grid} for the aggregated and per-task results, respectively. 
Please note that we have run these experiments only on a subset of the tasks (see \cref{fig:dmc-grid}). 
The aggregated \method{} + OU curve (dashed in \cref{fig:dmc-aggregate}) uses the \method{} + OU learning curve on each task where we ran correlated exploration and the \method{} learning curve on all other tasks.
\method{} + OU shows a faster convergence to a similar return value as the highest-performing baseline DIME, indicating that if computational resources permit, \method{}'s performance can be boosted even more by tuning $\tauou$.

\begin{figure}[t]\centering
  \includegraphics[width=\textwidth]{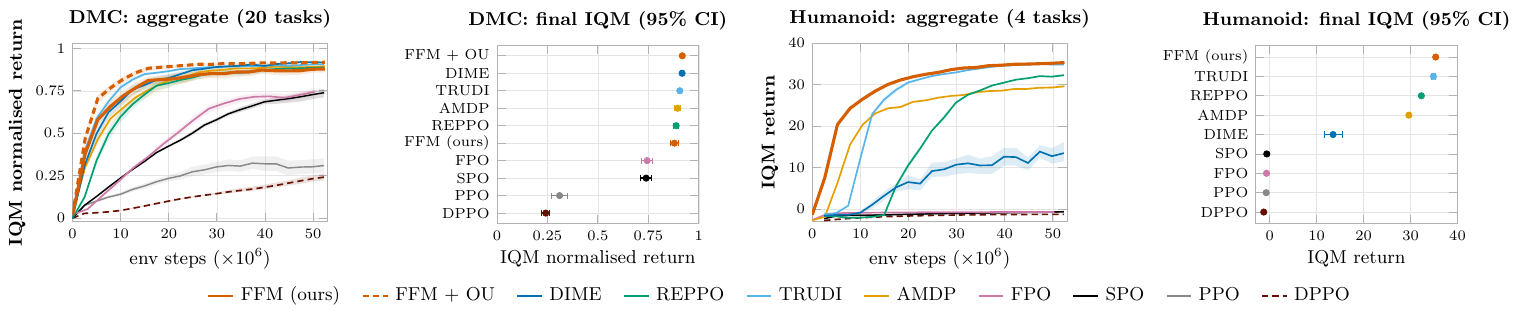}
  \caption{\textbf{Maximum Entropy Online RL.} We have run \method{} on all 20 MJX Playground tasks and on the four MJX humanoid tasks and present the aggregated results over all Playground tasks (left two panels) and the aggregated results over the humanoid tasks (right two panels). 
  We have evaluated all methods on $10$ seeds for every single environment (see \cref{fig:dmc-grid} and \cref{fig:humanoid-grid} for results) and report the IQM with $95\%$ stratified bootstrapped confidence intervals following \citet{agarwal2021deep}. 
  \method{} performs on par with the baselines on the Playground tasks (left two panels) but converges to the highest return much faster than TRUDI on the high-dimensional humanoid tasks. 
  This result indicates that \method{}'s benefits are most pronounced on high-dimensional tasks.
  Furthermore, we can boost \method{}'s performance by leveraging its deterministic mapping properties and incorporating correlated exploration (\method{} + OU), which leads to a significantly faster convergence to a similar return value as the best-performing baseline DIME (left two panels) (see \cref{fig:ou-grid}).} 
    \label{fig:dmc-aggregate}
\end{figure}

\section{Conclusion}
\label{sec:conclusion}

We introduced fixed-point flow matching, a data-free procedure for training flow policies using a flow-matching-style regression objective. Its basis is the target flow identity, which expresses the velocity field as a conditional average of $Q$-scores at clean actions and thereby replaces the data-derived targets of flow matching by critic gradients. Evaluating the resulting objective on samples from the current policy instead of the optimal policy yields a fixed-point iteration that has the optimal policy as a fixed point. The construction is conceptually simple, easy to implement, and needs no gradients through the sampling chain, no conversion of the flow into a stochastic differential equation, and no adjoint simulation. We consider two instantiations of the proposed framework: an online maximum entropy actor--critic algorithm, and offline-to-online fine-tuning against a behavior-tilted target. Across $50$ offline-to-online tasks and $24$ online tasks, both are competitive with or better than the strongest available baselines.

\paragraph{Limitations.} \method{} requires a \emph{differentiable} target. The regression target is built from the $Q$-score $\nabla_x Q(\obs, x)$, so the critic must be differentiable in the action and the policy update is only as accurate as the critic's gradient. This restricts \method{} to continuous action spaces. Extensions to non-differentiable targets and discrete action spaces are left for future work.

\section*{Acknowledgements}
The authors thank Yonghoon Dong for providing the results for several baselines used in \cref{fig:curves-agg}. D.B. acknowledges support by funding from a Google PhD fellowship in Machine Learning and ML Foundations, the pilot program Core Informatics of the Helmholtz Association (HGF) and the state of Baden-Württemberg through bwHPC, as well as the HoreKa supercomputer funded by the Ministry of Science, Research and the Arts Baden-Württemberg and by the German Federal Ministry of Education and Research. Furthermore, all authors are supported by the European Research Council (ERC) under the European Union’s Horizon Europe program through the project SMARTI³ (Grant Agreement No. 101171393).

\newpage

\bibliography{references}
\bibliographystyle{iclr2026_conference}

\clearpage
\appendix

\startcontents[appendices]
{\noindent\textbf{\large Appendix contents}\par\nobreak\vspace{4pt}
 \small\printcontents[appendices]{}{1}{\setcounter{tocdepth}{2}}}
\vspace{6pt}
\hrule
\vspace{10pt}

\newpage
\section{Proofs}
\label{app:proofs}
This appendix proves the three results of \cref{sec:tfm}: the target flow identity (\propref{prop:tfi}), the target flow matching objective (\corref{cor:tfm}), and the fixed-point property of \method{} (\thmref{thm:fixedpoint}). We use the notation of \cref{app:interpolants} and suppress the state $\obs$.

\paragraph{Proof of the target flow identity (\texorpdfstring{\propref{prop:tfi}}{Proposition})}
The proof combines the velocity--score bridge \cref{eq:app-bridge} with the \emph{target score identity} of \citet{debortoli2024target}: for a continuously differentiable density $\mu$ with $\E_\mu\big[\|\nabla\log\mu(X_1)\|\big] < \infty$ and all $t \in (0,1]$,
\begin{equation}
  \score^{\mu}(x,t)
  \;=\; \frac{1}{\alpha(t)}\, \E_{\mu}\big[\nabla \log \mu(X_1) \,\big|\, X_t = x\big],
  \label{eq:tsi}
\end{equation}
where $\score^\mu$ denotes the score \cref{eq:score-def} of the interpolant of $\mu$. Whereas the noise identity \cref{eq:app-tweedie} averages the noise, \cref{eq:tsi} averages the score of the \emph{target} at clean samples; it follows from an integration by parts that moves the gradient from the Gaussian kernel of $\rho_t$ onto $\mu$, and the factor $1/\alpha(t)$ accounts for the scaling of $X_1$ in \cref{eq:app-interpolant}.

\begin{proof}[Proof of \propref{prop:tfi}]
Apply \cref{eq:tsi} to $\mu = \piplus \propto e^{Q(\obs,\cdot)/\lambda}$. By \cref{eq:terminal-score}, $\nabla_x \log\piplus = \nabla_x Q(\obs,\cdot)/\lambda$, and the integrability assumption of \propref{prop:tfi} is exactly that of the identity, so the score of the noised Boltzmann policy is
\begin{equation}
  \score^{\piplus}(x,t)
  \;=\; \frac{1}{\lambda\,\alpha(t)}\, \E_{\piplus}\big[\nabla_x Q(\obs, X_1) \,\big|\, X_t = x\big].
  \label{eq:qscore}
\end{equation}
Inserting \cref{eq:qscore} into the bridge \cref{eq:app-bridge} gives \cref{eq:tfi}.
\end{proof}

\paragraph{Proof of target flow matching (\texorpdfstring{\corref{cor:tfm}}{Corollary})}
\begin{proof}[Proof of \corref{cor:tfm}]
Both objectives are weighted least-squares regressions of $\vel_\theta(X_t,t)$ onto a random target under the interpolant of $\piplus$ and $t \sim \mathcal U(0,1)$: flow matching \cref{eq:fm} onto $\dot X_t$, and target flow matching \cref{eq:tfm-objective} onto
\begin{equation}
  Y \;:=\; \tfrac{\dot\alpha(t)}{\alpha(t)}\, X_t + \tfrac{\cs(t)}{\lambda\,\alpha(t)}\, \nabla_x Q(\obs, X_1).
\end{equation}
When the minimization ranges over all velocity fields, the minimizer of a least-squares regression is the conditional expectation of the target given the regressor, here $(X_t, t)$, for almost every $(x,t)$ with respect to the interpolant marginals; since the weighting $w(t) > 0$ depends on $t$ alone, it does not change this pointwise minimizer. For flow matching the conditional expectation is $\E_{\piplus}[\dot X_t \mid X_t = x] = \vel(x,t)$ by the definition \cref{eq:velocity} of the velocity. For target flow matching, the first term of $Y$ is a function of $X_t$ and therefore
\begin{equation}
  \E_{\piplus}[Y \mid X_t = x]
  \;=\; \tfrac{\dot\alpha(t)}{\alpha(t)}\, x + \tfrac{\cs(t)}{\lambda\,\alpha(t)}\, \E_{\piplus}\big[\nabla_x Q(\obs, X_1) \,\big|\, X_t = x\big]
  \;=\; \vel(x,t)
\end{equation}
by the target flow identity \cref{eq:tfi}. Both objectives hence share the population minimizer $\vel$, for every positive weighting $w$.
\end{proof}

\paragraph{Proof of the fixed point (\texorpdfstring{\thmref{thm:fixedpoint}}{Theorem})}

We first state \thmref{thm:fixedpoint} in full. The regularity conditions the main text defers concern only the critic.

\begin{theorem*}[\thmref{thm:fixedpoint}, full statement]
Let $Q(\obs,\cdot)$ be continuously differentiable, let $\piplus \propto e^{Q/\lambda}$ with $\mathcal Z(\obs) = \int e^{Q/\lambda}\,\dt x < \infty$ have finite second moments and $\E_{\piplus}\big[\|\nabla_x Q(\obs, X_1)\|^2\big] < \infty$, and let $w(t) > 0$ on $(0,1)$. Consider the iteration \cref{eq:fp-tfm} in its population form, in which the regression is solved exactly over all velocity fields at every $t \in (0,1)$. If $\pi_{\theta_k}(\cdot \mid \obs) = \piplus(\cdot \mid \obs)$, then $\vel_{\theta_{k+1}}$ is the velocity field $\vel$ of \cref{eq:velocity}, $\pi_{\theta_{k+1}}(\cdot \mid \obs) = \piplus(\cdot \mid \obs)$, and $\vel_{\theta_{k+m}} = \vel_{\theta_{k+1}}$ for all $m \geq 1$.
\end{theorem*}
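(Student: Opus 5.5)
The argument reduces the fixed-point claim to \corref{cor:tfm}, plus a check that the population regression is well posed and that the ODE driven by the true velocity reproduces $\piplus$.

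\emph{Step 1: identify the loss.} If $\pi_{\theta_k}(\cdot\mid\obs)=\piplus(\cdot\mid\obs)$, the interpolant in $\mathcal{L}_{\mathrm{FFM}}(\theta,\theta_k)$ of \cref{eq:fp-tfm-loss} is built from $X_1\sim\piplus$. The joint law of $(t,X_1,\varepsilon,X_t)$ is then exactly the one in \cref{eq:tfm-objective}, so $\mathcal{L}_{\mathrm{FFM}}(\cdot,\theta_k)=\mathcal{L}_{\mathrm{TFM}}(\cdot)$ as functionals of the velocity field.

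\emph{Step 2: solve the regression.} I check that the target $Y=\tfrac{\dot\alpha}{\alpha}X_t+\tfrac{\cs}{\lambda\alpha}\nabla_xQ(\obs,X_1)$ is square integrable for each fixed $t\in(0,1)$. This follows from the finite second moments of $\piplus$ and of $\nabla_xQ$, since $X_t$ is a linear combination of $X_1$ and Gaussian noise. The $t$-dependent prefactors may blow up as $t\to0$. Because the regression is solved pointwise in $t$ and $w(t)>0$, I work at fixed $t$ and never need integrability of the $t$-integral. The $L^2$ projection argument in the proof of \corref{cor:tfm} then gives the minimizer $\E_{\piplus}[Y\mid X_t=x]$, $\rho_t$-almost everywhere. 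By \propref{prop:tfi}, whose first-moment hypothesis is implied by the second-moment one, this equals $\vel(x,t)$. Hence $\vel_{\theta_{k+1}}=\vel$.

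\emph{Step 3: recover the policy.} Integrating \cref{eq:ode} with $\vel$ transports $\gauss{\cdot}{0}{\sigma_0^2I}$ along $\rho_t$ to $\rho_1=\piplus$, by the stochastic-interpolant transport result cited in \cref{sec:prelim}. So $\pi_{\theta_{k+1}}=\piplus$. Induction then finishes the argument: the hypothesis holds again at step $k+1$, Steps 1–2 return the same $\vel$, and therefore $\vel_{\theta_{k+m}}=\vel_{\theta_{k+1}}$ for all $m\ge1$.

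\emph{Main obstacle.} I expect Step 3 to be the delicate one. The minimizer is only determined $\rho_t$-a.e., and the continuity equation with $\vel$ must have a unique solution generated by the flow, which needs some regularity of $\vel$. I would handle this in two ways. First, note that $\rho_t$ has full support for $t<1$ because of the Gaussian component, so the a.e. ambiguity is harmless. Second, invoke the transport statement of \citet{albergo2023stochastic} under the stated moment conditions, rather than reproving well-posedness of the ODE.
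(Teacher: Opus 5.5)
Your proposal is correct and follows the paper's proof essentially step for step. Once $\pi_{\theta_k}=\piplus$, the \method{} loss coincides with the target flow matching objective; \corref{cor:tfm} together with \propref{prop:tfi} identifies the pointwise-in-$t$ minimizer as $\vel$; the interpolant transport result gives $\pi_{\theta_{k+1}}=\piplus$; and induction finishes. Your fixed-$t$ square-integrability check and your remark on the $\rho_t$-a.e.\ ambiguity go slightly beyond what the paper writes out, and do not change the argument. Full support of $\rho_t$ settles that ambiguity only when combined with continuity of the version of the velocity field being integrated, which holds for $t\in(0,1)$.
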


The two moment conditions make the regression target of \cref{eq:fp-tfm-loss} square integrable at every $t \in (0,1)$, so that the regression at each time is well posed, and they imply the integrability assumption of \propref{prop:tfi}.

\begin{proof}[Proof of \thmref{thm:fixedpoint}]
The loss \cref{eq:fp-tfm-loss} depends on the current parameters $\theta_k$ only through the distribution $\pi_{\theta_k}(\cdot \mid \obs)$ of the clean actions $X_1$. For $\pi_{\theta_k} = \piplus$ it therefore coincides with the target flow matching objective \cref{eq:tfm-objective}, whose minimizer over all velocity fields is the velocity field $\vel$ of the interpolant of $\piplus$ by \corref{cor:tfm}; hence $\vel_{\theta_{k+1}} = \vel$. Integrating \cref{eq:ode} with $\vel$ transports the prior to $\piplus$ (\cref{app:interpolants}), so $\pi_{\theta_{k+1}} = \piplus$. The next update therefore again minimizes \cref{eq:tfm-objective} and returns $\vel$, and by induction does every later one.
\end{proof}

\paragraph{The behavior-tilted target}
The argument applies verbatim to the offline-to-online objective \cref{eq:o2o-target}. There the target is $\piplus \propto \pref\, e^{Q/\lambda}$, whose terminal score is $g = \scoreref + \nabla_x Q/\lambda$, and the regression target of \cref{eq:o2o-target} is $\tfrac{\dot\alpha(t)}{\alpha(t)} X_t + \tfrac{\cs(t)}{\alpha(t)}\, g(X_1)$. Applying the target score identity \cref{eq:tsi} to $\mu = \piplus$, with $\nabla\log\piplus = g$ since the normalizer of $\piplus$ is constant in $x$, and inserting it into the bridge \cref{eq:app-bridge} shows that under sampling from $\piplus$ the conditional expectation of this target is the velocity field of $\piplus$. The proof above then goes through with $\nabla_x Q/\lambda$ replaced by $g$, under the conditions of the theorem with $Q$ replaced by $Q + \lambda\log\pref$.

\section{Further method details}
\label{app:details}

This appendix collects details of the method that the main text defers: the background on one-sided stochastic interpolants that the main text and the proofs of \cref{app:proofs} rely on (\cref{app:interpolants}), the family of regression targets that mixes the \method{} target with the flow-matching target and its variance-optimal member (\cref{app:targets}), and the correlated exploration scheme of \cref{sec:online} (\cref{app:corr-expl}).

\subsection{One-sided stochastic interpolants}
\label{app:interpolants}

This subsection collects the background on one-sided stochastic interpolants \citep{albergo2023stochastic} that the main text and the proofs of \cref{app:proofs} rely on: the score of the noised marginals, the denoising identities that express it as a conditional expectation, the velocity--score bridge that eliminates the score from all main-text formulas, and the deterministic and stochastic samplers that share the marginals of the flow. All identities are standard and we state them without proof. Throughout, $\mu$ is a probability density on $\R^d$ (an action distribution; the conditioning state $\obs$ is suppressed), $X_1 \sim \mu$ and $\varepsilon \sim \gauss{\cdot}{0}{I}$ are independent, and
\begin{equation}
  X_t = \alpha(t) X_1 + \beta(t)\varepsilon,
  \qquad
  \alpha(0)=0,\;\; \alpha(1)=1,\;\; \beta(0)=\sigma_0,\;\; \beta(1)=0,
  \label{eq:app-interpolant}
\end{equation}
with $\alpha, \beta$ continuously differentiable, $\alpha(t) > 0$ on $(0,1]$ and $\beta(t) > 0$ on $[0,1)$. Expectations $\E_\mu[\cdot]$ are over the pair $(X_1, \varepsilon)$, that is, over the interpolant of $\mu$; they abbreviate the main-text notation $\Eint{\mu}$ of \cref{eq:velocity}. We write $\rho_t$ for the density of $X_t$, which is the Gaussian convolution $\rho_t(x) = \int \mu(x_1)\, \varphi_{\beta^2(t)}(x - \alpha(t) x_1)\,\dt x_1$ with $\varphi_{\beta^2(t)}$ the density of $\gauss{\cdot}{0}{\beta^2(t) I}$, and we call
\begin{equation}
  \score(x,t) \;:=\; \nabla_x \log\rho_t(x)
  \label{eq:score-def}
\end{equation}
the \emph{score} of the noised marginals. It is the second field, next to the velocity \cref{eq:velocity}, around which the constructions of diffusion and flow models revolve.

\paragraph{Denoising identities}
Since $X_t \mid X_1 \sim \gauss{\cdot}{\alpha(t) X_1}{\beta^2(t) I}$, the score of the Gaussian-smoothed marginal is a conditional average of the noise, and the posterior mean of the clean sample is an affine function of the score. Written for the interpolant \cref{eq:app-interpolant}, Tweedie's formula \citep{robbins1956empirical,efron2011tweedie} reads
\begin{equation}
  \score(x,t) = -\frac{1}{\beta(t)}\,\E_\mu[\varepsilon \mid X_t = x],
  \qquad
  \hat x_1(x,t) := \E_\mu[X_1 \mid X_t = x] = \frac{x + \beta^2(t)\, \score(x,t)}{\alpha(t)},
  \label{eq:app-tweedie}
\end{equation}
where the second identity follows from the first by taking the conditional expectation of $x = \alpha(t) X_1 + \beta(t)\varepsilon$ given $X_t = x$. We refer to the first identity as the \emph{noise identity} and to $\hat x_1$ as the \emph{denoiser}; the noise identity is also what turns score matching into the denoising regression of \citet{vincent2011dsm}.

\paragraph{The velocity--score bridge and sampling}
The velocity \cref{eq:velocity} is the conditional mean of $\dot X_t = \dot\alpha(t) X_1 + \dot\beta(t) \varepsilon$, so inserting the two identities of \cref{eq:app-tweedie} expresses it through the score,
\begin{equation}
  \vel(x,t)
  = \dot\alpha(t)\, \E_\mu[X_1 \mid X_t = x] + \dot\beta(t)\, \E_\mu[\varepsilon \mid X_t = x]
  = \frac{\dot\alpha(t)}{\alpha(t)}\, x + \cs(t)\, \score(x,t),
  \label{eq:app-bridge}
\end{equation}
with the coefficient $\cs(t) = \tfrac{\dot\alpha(t)}{\alpha(t)}\beta^2(t) - \dot\beta(t)\beta(t) > 0$ of \cref{eq:cs}, and conversely
\begin{equation}
  \score(x,t) \;=\; \frac{1}{\cs(t)}\Big(\vel(x,t) - \tfrac{\dot\alpha(t)}{\alpha(t)}\, x\Big).
  \label{eq:app-bridge-inv}
\end{equation}
For the one-sided interpolant either field therefore determines the other pointwise \citep{albergo2023stochastic}: the \emph{velocity--score bridge}. \Cref{eq:app-bridge-inv} is how the main text reads the reference's terminal score off a pretrained velocity field in \cref{eq:refscore-bridge}; since $\cs(t) \to 0$ as $t \to 1$, the score is evaluated at a time close to, but not at, the clean endpoint.

The marginals $\rho_t$ solve the continuity equation $\partial_t \rho_t + \nabla\!\cdot\!(\rho_t\, \vel) = 0$ \citep{albergo2023stochastic}, so integrating the ODE \cref{eq:ode} from the prior $\rho_0 = \gauss{\cdot}{0}{\sigma_0^2 I}$ transports it to $\rho_1 = \mu$: this is the flow policy of the main text. The same marginals are shared by the family of SDEs
\begin{equation}
  \dt X_t \;=\; \Big(\vel + \tfrac{\sigma^2(t)}{2}\, \score\Big)(X_t,t)\,\dt t + \sigma(t)\, \dt W_t,
  \qquad X_0 \sim \gauss{\cdot}{0}{\sigma_0^2 I},
  \label{eq:sde}
\end{equation}
for any diffusion coefficient $\sigma(t) \geq 0$, because the additional drift $\tfrac{\sigma^2(t)}{2}\score$ cancels the diffusion term in the Fokker--Planck equation and leaves the continuity equation of the flow \citep{song2021sde,albergo2023stochastic}. A trained velocity field can hence be sampled either deterministically by \cref{eq:ode} or stochastically by \cref{eq:sde}, with the score supplied by the bridge \cref{eq:app-bridge-inv}, and the time reversal of \cref{eq:sde}, with drift $\vel - \tfrac{\sigma^2(t)}{2}\score$ and run from $t = 1$ to $0$, transports $\mu$ back to the prior \citep{song2021sde}. We use the ODE throughout.

\subsection{Generalized regression targets and control variates}
\label{app:targets}

The main text regresses the velocity head onto the target of \cref{eq:tfm-objective}, in which the critic gradient is, besides $X_t$, the only random quantity. This subsection places that target in a one-parameter family that mixes it with the flow-matching target, and states the member of the family with the smallest variance. The family is most transparent in score space, where it is the generalized target score identity of the bridge matching sampler \citep{blessing2026bms}, and transfers to velocity space through the bridge \cref{eq:app-bridge}; the velocity form is what our offline-to-online implementation uses (\cref{app:exp}). We state the results without proofs, which are given in \citet{blessing2026bms} and follow the pattern of \cref{app:proofs}.

\paragraph{The generalized target score identity}

The noise identity \cref{eq:app-tweedie} and the target score identity \cref{eq:tsi} express the same score $\score^{\mu}(x,t)$ as conditional expectations of two different random variables, the scaled noise $-\varepsilon/\beta(t)$ and the scaled target score $\nabla\log\mu(X_1)/\alpha(t)$. Any convex combination of the two is therefore again conditionally unbiased: for a mixing schedule $c(t) \in [0,1]$,
\begin{equation}
  \score^{\mu}(x,t)
  \;=\; \E_\mu\big[\zeta(X_1,\varepsilon,t) \,\big|\, X_t = x\big],
  \qquad
  \zeta(X_1,\varepsilon,t) \;:=\; \frac{c(t)}{\alpha(t)}\, \nabla \log \mu(X_1) \;-\; \frac{1 - c(t)}{\beta(t)}\, \varepsilon,
  \label{eq:gtsi}
\end{equation}
which is the \emph{generalized target score identity} of \citet{blessing2026bms}; $c \equiv 1$ recovers \cref{eq:tsi} and $c \equiv 0$ the noise identity. The two random variables have complementary variances: the target-score term carries the factor $1/\alpha(t)$, which diverges at the noise end $t \to 0$, and the noise term the factor $1/\beta(t)$, which diverges at the clean end $t \to 1$. Since the conditional mean of the noise term is known, it acts as a \emph{control variate} for the target-score term, and the mixing schedule can be chosen to minimize the variance of the combined estimator. For a schedule that depends on $t$ alone, the relevant quantity is the variance averaged over $X_t$, $\E_\mu\big[\|\zeta - \score^{\mu}(X_t,t)\|^2\big] = \E_\mu\big[\|\zeta\|^2\big] - \E_\mu\big[\|\score^{\mu}(X_t,t)\|^2\big]$, whose second term does not depend on $c$. Because $X_1$ and $\varepsilon$ are independent, the cross term of the first vanishes and
\begin{equation}
  \E_\mu\big[\|\zeta(X_1,\varepsilon,t)\|^2\big]
  \;=\; d\,\Big(\frac{c^2(t)}{\alpha^2(t)}\,\bar{\mathcal I} + \frac{(1 - c(t))^2}{\beta^2(t)}\Big),
  \qquad
  \bar{\mathcal I} := \tfrac{1}{d}\,\E_{\mu}\big[\|\nabla\log\mu(X_1)\|^2\big],
  \label{eq:zeta-var}
\end{equation}
where $\bar{\mathcal I}$ is the per-coordinate Fisher information of $\mu$. Minimizing \cref{eq:zeta-var} over $c(t)$ yields \citep{blessing2026bms}
\begin{equation}
  c^\star(t) \;=\; \frac{\alpha^2(t)}{\alpha^2(t) + \beta^2(t)\, \bar{\mathcal I}},
  \label{eq:cstar}
\end{equation}
for any target $\mu$ with finite Fisher information; in practice, $\bar{\mathcal I}$ is estimated on the current batch. Its shape is that of an inverse signal-to-noise ratio: the target-score term dominates near $t = 1$, where samples are clean, and the noise anchor near $t = 0$.

\paragraph{The generalized target flow identity}
The velocity image of the family \cref{eq:gtsi} has a direct reading. Two velocity targets are available for the interpolant \cref{eq:app-interpolant}, the flow-matching target $\dot X_t$ of \cref{eq:fm} and the target-flow-identity target of \cref{eq:tfm-objective}, and their convex combination is
\begin{equation}
  Y_c(X_1, \varepsilon, t)
  \;:=\; \big(1 - c(t)\big)\, \dot X_t
  \;+\; c(t)\, \Big(\tfrac{\dot\alpha(t)}{\alpha(t)}\, X_t + \tfrac{\cs(t)}{\lambda\,\alpha(t)}\, \nabla_x Q(\obs, X_1)\Big),
  \qquad c(t) \in [0,1].
  \label{eq:velmix}
\end{equation}
Under sampling from the Boltzmann policy, both endpoints have the same conditional mean, the velocity field generating $\piplus$ (\cref{eq:velocity} for $c \equiv 0$ and \propref{prop:tfi} for $c \equiv 1$), which gives the \emph{generalized target flow identity}
\begin{equation}
  \vel(x,t \mid \obs) \;=\; \Eint{\piplus}\big[\, Y_c(X_1, \varepsilon, t) \,\big|\, X_t = x\big]
  \qquad \text{for every } c(t) \in [0,1].
  \label{eq:gtfi}
\end{equation}
Substituting $\dot X_t = \dot\alpha(t) X_1 + \dot\beta(t)\varepsilon$, $X_t = \alpha(t) X_1 + \beta(t)\varepsilon$, and the schedule identity $\tfrac{\dot\alpha(t)}{\alpha(t)}\beta(t) = \dot\beta(t) + \tfrac{\cs(t)}{\beta(t)}$ into \cref{eq:velmix} shows that the family is exactly the velocity image of the score-space target $\zeta$ of \cref{eq:gtsi},
\begin{equation}
  Y_c
  \;=\; \tfrac{\dot\alpha(t)}{\alpha(t)}\, X_t \;+\; \cs(t)\,\zeta(X_1,\varepsilon,t),
  \label{eq:velmix-equiv}
\end{equation}
so \cref{eq:gtfi} is the bridge applied to \cref{eq:gtsi}. By \cref{eq:gtfi}, the proof of \thmref{thm:fixedpoint} carries over to the mixed target: under sampling from $\piplus$ the regression onto $Y_c$ returns the velocity field of $\piplus$, so $\piplus$ remains a fixed point for every mixing schedule $c(t) \in [0,1]$. The map $\zeta \mapsto Y_c$ is affine with a shift that depends only on $X_t$ and a scale $\cs(t)$ that does not depend on $c$, so $\mathrm{Var}(Y_c \mid X_t = x) = \cs^2(t)\, \mathrm{Var}(\zeta \mid X_t = x)$ and the two families share their variance-minimizing schedule:
\begin{equation}
  c^\star(t) \;=\; \frac{\alpha^2(t)}{\alpha^2(t) + \beta^2(t)\, \bar{\mathcal I}},
  \qquad
  \bar{\mathcal I} = \tfrac{1}{d}\,\E_q\big[\|\nabla_x Q/\lambda\|^2\big],
  \label{eq:velmix-cstar}
\end{equation}
is the variance-optimal control variate also in velocity space. It also removes the endpoint singularity noted in \cref{sec:tfm}: in \cref{eq:velmix} the critic gradient enters with the coefficient $c(t)\,\cs(t)/(\lambda\,\alpha(t))$, and since $c^\star(t) = \Theta(\alpha^2(t))$ as $t \to 0$ while $\alpha(t)\,\cs(t) \to \dot\alpha(0)\,\sigma_0^2 < \infty$, this coefficient stays bounded at the noise end, whereas $c^\star(t) \to 1$ as $t \to 1$ hands the target back to the pure \method{} form where actions are clean. No time truncation $t \ge \epsilon_0$ is needed; our offline-to-online implementation uses this schedule (\cref{app:exp}).

\subsection{Correlated exploration}
\label{app:corr-expl}

This subsection expands on the correlated exploration scheme of \cref{sec:online}.

\paragraph{The flow policy as a deterministic map}
Integrating the ODE \cref{eq:ode} with the learned velocity field from the prior draw $X_0 = \sigma_0\varepsilon$ up to $t = 1$ is a deterministic computation once $\varepsilon$ and the state $\obs$ are fixed. We write $\Psi_\theta(\varepsilon \mid \obs)$ for the resulting clean action, so that $\Psi_\theta(\cdot \mid \obs)$ is the sampler of the flow policy: it maps a prior draw to an action, and $\pi_\theta(\cdot \mid \obs)$ is the law of $\Psi_\theta(\varepsilon \mid \obs)$ for $\varepsilon \sim \gauss{\cdot}{0}{I}$. In particular, an action is a sample of $\pi_\theta(\cdot \mid \obs)$ whenever the prior draw it is computed from is distributed according to $\gauss{\cdot}{0}{I}$, regardless of how that draw was generated.

\paragraph{The Ornstein--Uhlenbeck chain is stationary}
The chain \cref{eq:ou} is a first-order autoregressive process with coefficient $a := e^{-1/\tauou} \in (0,1)$, that is, $\varepsilon_{k+1} = a\,\varepsilon_k + \sqrt{1 - a^2}\,\xi_k$. If $\varepsilon_k \sim \gauss{\cdot}{0}{I}$ and $\xi_k \sim \gauss{\cdot}{0}{I}$ are independent, then $\varepsilon_{k+1}$ is Gaussian with mean zero and covariance
\begin{equation}
  \E\big[\varepsilon_{k+1}\varepsilon_{k+1}^\top\big] \;=\; a^2\, I + (1 - a^2)\, I \;=\; I,
  \label{eq:ou-stationary}
\end{equation}
so $\varepsilon_{k+1} \sim \gauss{\cdot}{0}{I}$ as well. Since $\varepsilon_0 \sim \gauss{\cdot}{0}{I}$, induction over $k$ shows that every $\varepsilon_k$ is marginally standard normal, and by the previous paragraph every executed action $x_k = \Psi_\theta(\varepsilon_k \mid \obs_k)$ is a sample of $\pi_\theta(\cdot \mid \obs_k)$. What changes is the dependence between draws: unrolling the recursion gives
\begin{equation}
  \E\big[\varepsilon_{k+j}\varepsilon_{k}^\top\big] \;=\; a^{j}\, I \;=\; e^{-j/\tauou}\, I, \qquad j \ge 0,
  \label{eq:ou-autocorr}
\end{equation}
so the correlation between two prior draws decays exponentially with their distance in time, with $\tauou$ steps as the time constant. The two limits recover familiar schemes: $\tauou \to 0$ gives $a \to 0$ and independent draws at every step, which is the usual way of sampling the policy, and $\tauou \to \infty$ gives $a \to 1$ and a single draw $\varepsilon_0$ that is kept for the whole episode, similar to the state-dependent exploration of \citet{raffin2022gsde}.

\section{Extended related work}
\label{app:extended-rw}

This appendix complements \cref{sec:rw}. The first paragraph places \method{} in the family of matching objectives that train diffusion and flow models from an unnormalized target rather than from samples, which is where its fixed-point structure comes from. The second collects work on critic-driven updates of diffusion and flow policies that \cref{sec:rw} groups but has no room to discuss individually.

\paragraph{Fixed-point methods for diffusion and flow models.}
Training a diffusion or flow model from an unnormalized density instead of samples is the problem of neural samplers. Early approaches minimize a path-space divergence by simulating the generative process \citep{zhang2022pis,vargas2023dds}. Matching-based samplers instead regress onto a target computed from the energy and its gradient: iDEM \citep{akhoundsadegh2024idem}, iEFM \citep{woo2024iefm}, BNEM \citep{ouyang2026bnem}, and EWFM \citep{dern2025ewfm} estimate the noised score, velocity, or energy by importance-weighted Monte Carlo, target score matching \citep{debortoli2024target}, on which \method{} builds, writes the noised score as a conditional expectation of the clean-data score but requires samples of the target, and tilt matching \citep{potaptchik2025tilt} matches the velocity of a reward-tilted interpolant without reward gradients. These methods are iterative: the target is evaluated with samples of the current model and the regression is repeated, which is the fixed-point structure that \thmref{thm:fixedpoint} makes explicit for critic-driven targets. Adjoint matching \citep{domingoenrich2025adjoint} forms a second family of such regressions. Proposed for reward fine-tuning, it has since been used for sampling from unnormalized densities \citep{havens2025adjointsampling,liu2025asbs,mosso2026riemannian}, for discrete generative models \citep{so2026dam}, for faster fine-tuning pipelines \citep{guo2026improved,shin2026efficient}, and in RL for policy extraction \citep{li2026qam,dong2026trust,ghanem2026eram} and maximum entropy RL \citep{amdp2025}; its theory has been connected to the stochastic maximum principle \citep{domingoenrich2026smp} and unified with sampling \citep{domingoenrich2026unified}. The bridge matching sampler \citep{blessing2026bms} shows that these matching objectives, including adjoint sampling and ASBS, are fixed-point iterations rooted in Nelson's relation and stabilizes them by damping. \method{} belongs to this family with two differences: the target flow identity provides the regression target from critic gradients at clean actions, so no adjoint ODE is needed, and the target itself moves as the critic is learned.

\paragraph{Further critic-driven policy updates.}
Sorted by how the critic enters the actor update, the methods of \cref{sec:rw} fall into a few families, and we record here the entries the main text omits. Methods based on backpropagation through time differentiate the critic through the entire sampling chain: besides those cited in \cref{sec:rw}, \citet{lv2025flowbased} constrain the update relative to a behavior policy. GenPO \citep{ding2026genpo} instead applies a PPO-style objective \citep{schulman2017ppo} to a diffusion policy whose likelihood is made tractable by a coupling construction. A second family avoids the chain by regressing onto improved \emph{actions}: DIPO \citep{yang2023dipo} and \citet{li2024learning} improve the actions stored in the replay buffer by gradient ascent on $Q$ and then clone them with the standard denoising loss. This is the closest relative of \method{} among the action-target methods, and the difference is that its regression target is an action rather than a velocity, so the noise level of the interpolant is not accounted for. A third family leaves the denoising loss in place and weights it by the exponentiated critic value \citep{ding2024qvpo,ma2025dpmd,dong2025maxentdp}, of which one-step flow policy mirror descent \citep{chen2025one} is the one-step variant, and flow map policies trained by mirror descent \citep{dong2026mean,gao2026fmer} are the multimodal ones. None of these families obtains its target from the velocity field of the Boltzmann policy itself, which is what the target flow identity provides.

\section{Maximum-entropy reinforcement learning}
\label{app:maxent}
This section provides information on \method{}'s implementation in the online on-policy maximum-entropy RL setting as proposed in \cref{sec:online}.
We review the maximum-entropy actor-critic variant similar to that of \citet{haarnoja2018sac}, where we optimize the state-action value function and the velocity field's parameters.
Moreover, we give an overview of how \method{} resolves bounding final actions that are executed in the environment, provide an algorithm box, report ablations on design choices, and provide detailed per-task results.

\begin{figure}[!tbp]\centering
  \includegraphics[width=\textwidth]{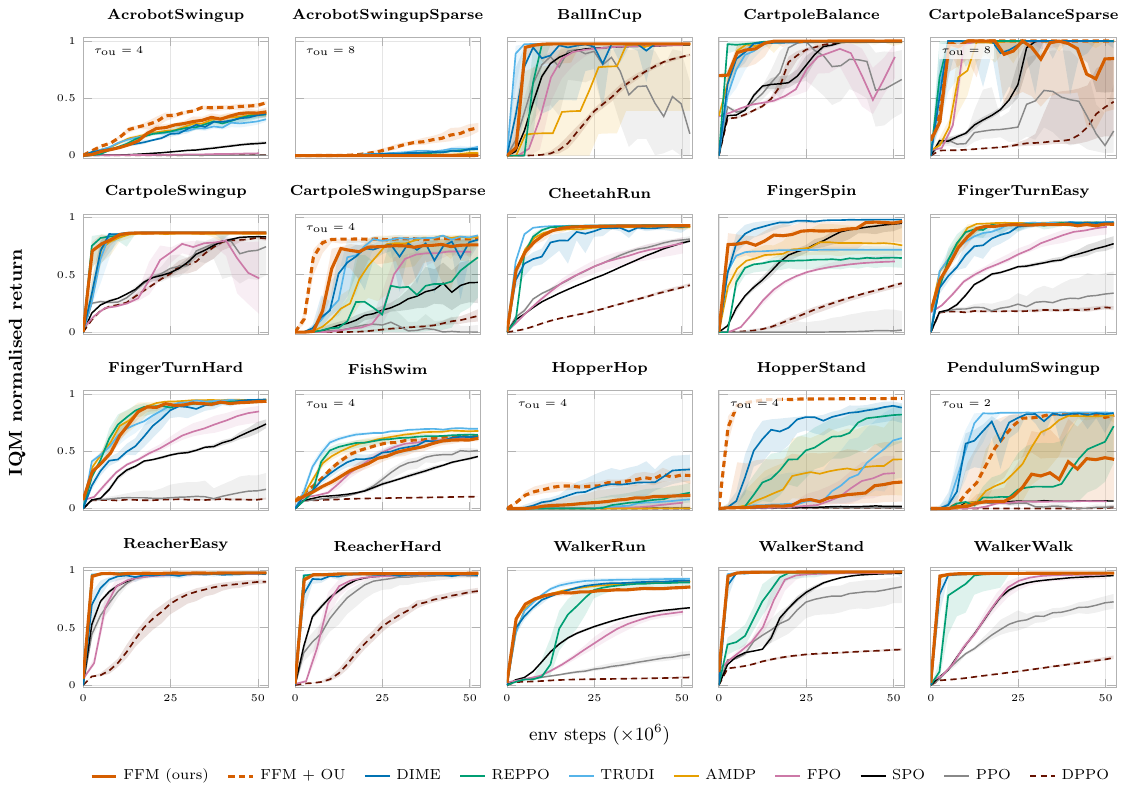}
  \caption{\textbf{Maximum Entropy Online RL: MJX Playground tasks.} We plot the IQM per-task learning curves with $95\%$ stratified bootstrapped confidence intervals \citep{agarwal2021deep} over $10$ seeds. 
  On most of the tasks, \method{} performs similarly to or on par with baselines. 
  However, we observe that there are some tasks such as \textit{AcrobotSwingupSparse, HopperHop, HopperStand}, where \method{} (and most of the baselines) perform badly, indicating that a more sophisticated exploration strategy is required.
  On a subset of the tasks, we have therefore run \method{} with correlated exploration (\method{} + OU), where we show the chosen $\tauou$ hyperparameter on the respective plot. 
  Please note that correlated exploration is only possible due to \method{}'s deterministic mapping (\Cref{sec:online})
  Correlated exploration can help improve performance, such as in \textit{HopperHop}, or even more significantly in the \textit{HopperStand} task. 
  The aggregated results when using the correlated exploration curves whenever applicable are given in \cref{fig:dmc-aggregate} (left two panels).}
  \label{fig:dmc-grid}
\end{figure}

\begin{figure}[!tbp]\centering
  \includegraphics[width=\textwidth]{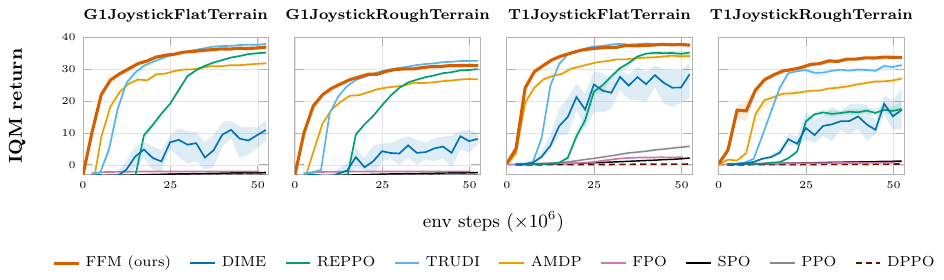}
  \caption{\textbf{Maximum Entropy Online RL: MJX Humanoid tasks.} We plot the IQM per-task learning curves with $95\%$ stratified bootstrapped confidence intervals \citep{agarwal2021deep} over $10$ seeds. \method{} converges faster to a similar performance level as TRUDI and performs significantly better than TRUDI on the \textit{T1JoystickRoughTerrain} task. 
  The high-dimensional humanoid tasks showcase \method{}'s benefits more clearly.
  }
  \label{fig:humanoid-grid}
\end{figure}

\subsection{Actor-Critic based Maximum-entropy reinforcement learning}
\label{app:maxent-prelim}
In maximum-entropy RL \citep{ziebart2008maximum, levine2018rlai, haarnoja2017sql, haarnoja2018sac}, the rewards are augmented with the current policy's entropy, leading to the objective
\begin{equation}
  J(\pi) \;=\; \E_{\pi}\Big[\textstyle\sum_{k\ge 0}\gamma^{k}\big(r(\obs_k,x_k) + \lambda\, \mathcal{H}(\pi(\cdot \mid \obs_k))\big)\Big],
  \label{eq:maxent-objective}
\end{equation}
where $\lambda > 0$ trades off maximizing the expected task return against maximizing the policy's stochasticity by maximizing the entropy $\mathcal{H}(\pi(\cdot \mid \obs_k))$.
The soft state-action value function is given as
\begin{equation}
    Q^{\pi}(\obs,x) = \E_{\pi}\big[\sum_{k\ge 0}\gamma^{k} \big(r(\obs_k,x_k) + \lambda\, \mathcal{H}(\pi(\cdot \mid \obs_k))\big) \mid \obs_0 = \obs,\, x_0 = x\big],
\end{equation}
which can also be expressed using the recursive Bellman equation as
\begin{equation}
  Q^{\pi}(\obs,x) = r(\obs,x) + \gamma\, \E_{\obs' \sim P, x' \sim \pi}\big[Q^{\pi}(\obs', x') - \lambda \log \pi(x'\mid \obs')\big].
  \label{eq:soft-bellman}
\end{equation}
Based on this equation, we can then update the parameters of $Q^{\pi}$ using the Bellman residual, 
\begin{equation}
    J_Q(\phi) = \mathbb{E}\left[\frac{1}{2}\left(Q_\phi^\pi(\obs, x) - y\right)^2\right]
\end{equation}
where $\phi$ are the parameters of the critic network that we have omitted so far to avoid cluttered notation, and $y$ are the critic targets, that are calculated by 
\begin{equation}
    y = r(\obs, x) + \gamma \mathbb{E}_{\obs'\sim p, x'\sim\pi}\left[Q_{\bar{\phi}}^\pi(\obs', x') - \lambda \log\pi(x'\mid \obs')\right]
\end{equation}
for the one-step lookahead, mainly used in the off-policy RL setting \citep{haarnoja2018sac, fujimoto2018td3}.
Here, $\bar{\phi}$ denotes the target network parameters, a Polyak averaging of the actual parameters $\phi$.

In the on-policy setting, commonly no target network is used, and the target is calculated using N-step returns based on a Monte Carlo estimate with samples from the current rollout and the current target as
\begin{equation}
    y = \sum_{k=t}^N \gamma^{k-t}\left(r(\obs_k, x_k) - \lambda \log\pi(x_k\mid\obs_k)\right) + \gamma^{N+1} Q_\phi(\obs_N, x_N),
    \label{eq:N-step-return}
\end{equation}
as suggested, for example, by \citet{reppo2025} and used in our online on-policy methods. Note that the gradients for the critic's parameters $\phi$ are stopped for calculating the targets in \cref{eq:N-step-return}.
For calculating the entropy and the likelihoods of the current policy, we follow \cref{eq:entropy}.
Once the parameters of the Q-function are updated in the policy evaluation step, the parameters of the policy $\pi$ are updated in the policy improvement step such that the expected soft Q-function is maximized.
In the standard case, e.g., with Gaussian policies, the objective for the policy improvement step follows from
\begin{equation}
    L(\pi) = \kl{\pi(x \mid \obs)}{\frac{\exp\big(Q(\obs, x)/\lambda\big)}{\mathcal{Z}(\obs)}},
\end{equation}
where $\mathcal{Z}(\obs)=\int \exp\big(Q(\obs, x)/\lambda\big)\,\dt x$ is the normalizing constant and independent of the action $x$.
It can be shown that alternating both steps converges to the optimal maximum-entropy policy in the tabular setting \citep{haarnoja2017sql}.
For \method{} to optimize the flow policy, we rely on \cref{eq:tr_ffm}.

\paragraph{Enforcing action bounds}
In RL benchmarks and potentially in real-world use cases, the applied actions are bounded due to physical constraints such as bounded motor torques.
\method{} squashes the generated actions to $[-1,1]^d$, to avoid querying the Q-function at unreasonable action regions.
However, please note that we keep the flow unconstrained on $\R^d$, such that the theory does not change, but we map the flow's clean output $X_1$ to the environment action through a coordinate-wise squashing function $g$
\begin{equation}
  x_{\mathrm{env}} = g(X_1),
\end{equation}
where, in the literature \citep{haarnoja2018sac}, the $\tanh$ function is used as
\begin{equation}
    g(z) = \tanh(z),
\end{equation}
with the log derivative
\begin{equation}
    \log g'_{\tanh}(z) = \log\big(1 - \tanh^2 z\big).
    \label{eq:tanh-squash-jac}
\end{equation}
Recently, \citet{amdp2025} proposed an alternative squashing function and reported that using the error function
\begin{equation}
    g(z) = \mathrm{erf}(z/\sqrt2) 
\end{equation}
yields improved numerical behavior.
Its log derivative is given by
\begin{equation}
    \log g'_{\mathrm{erf}}(z) = \tfrac12\log\tfrac{2}{\pi} - \tfrac{z^2}{2} .
  \label{eq:erf-squash-jac}
\end{equation}

However, please note that squashing the actions changes the likelihood calculations by the change-of-variables rule, such that the likelihood of the new distribution is
\begin{equation}
    \pi_{\mathrm{env}}(\cdot\mid\obs) = \pi(\cdot\mid\obs) \Biggl|\mathrm{det}\left(\frac{dx_{\mathrm{env}}}{dx}\right)\Biggr|^{-1},
    \label{eq:CoV likelihood}
\end{equation}
which leads to the log-likelihoods needed for calculating the entropy values as
\begin{equation}
    \log \pi_{\mathrm{env}}(\cdot\mid\obs) = \log \pi(\cdot\mid\obs) - \sum_i \log g'(X_{1,i}),
\end{equation}
where, depending on which squashing function is used, either \cref{eq:tanh-squash-jac} or \cref{eq:erf-squash-jac} is used for the Jacobian in \cref{eq:CoV likelihood}.
This log-likelihood is required to estimate the entropy (\cref{eq:entropy}) over a batch of samples during the rollout to calculate the soft Q-function in maximum-entropy RL.

Because the \method{} target from \cref{eq:tfm-objective} is w.r.t. the clean action $X_1$ rather than the squashed action $x_{\mathrm{env}}$, a consequence of squashing the clean actions is that, by the chain rule, the critic's gradient w.r.t. ${X_1}$ is
\begin{equation}
    \nabla_{X_1} Q(\obs, g(X_1)) = g'(X_1) \odot \nabla_{x_{\mathrm{env}}} Q(\obs, x_{\mathrm{env}}),
\end{equation}
where $\odot$ denotes coordinate-wise multiplication.
In our experiments, we have used the error function as the squashing function, although, as shown in our ablation studies, the tanh squashing only performs slightly worse (\cref{app:online-results}).

\subsection{Pseudocode}
\label{app:algorithm}

\begin{algorithm}[H]
\caption{Online on-policy maximum-entropy RL with an \method{} actor}
\label{alg:maxent-online}
\begin{algorithmic}[1]
\State \textbf{input:} Initialized parameters for velocity field $\vel_\theta$, critic $Q_\phi$, temperature $\lambda$ with entropy target $\mathcal{H}^\star$, trust-region budget $\varepsilon_{\mathrm{TR}}$ with multiplier $\nu$, schedules $\alpha, \beta$ (\cref{eq:geom-schedule}), time weighting $w$
\For{each iteration}
  \State \textbf{rollout (on-policy, no gradients):}
  \Statex \hspace{\algorithmicindent}\hspace{\algorithmicindent} \textbullet\ act $x_k \sim \pi_\theta(\cdot \mid \obs_k)$ by integrating \cref{eq:ode}
  \Statex \hspace{\algorithmicindent}\hspace{\algorithmicindent} \textbullet\ optionally, correlate the prior draws via the OU chain of \cref{eq:ou}
  \Statex \hspace{\algorithmicindent}\hspace{\algorithmicindent} \textbullet\ collect $\{(\obs_k, x_k, r_k, \obs_{k+1})\}_{k=t}^{N}$
  \State \textbf{likelihoods:} $\log \pi_\theta(x_k \mid \obs_k)$ by \cref{eq:entropy}, divergences by Hutchinson's estimator
  \State \textbf{policy evaluation:} calculate $N$-step soft returns of \cref{eq:N-step-return} (target gradients stopped)
  \Statex \hspace{\algorithmicindent}\hspace{\algorithmicindent} $\displaystyle y_t = \sum_{k=t}^{N} \gamma^{\,k-t}\bigl(r(\obs_k, x_k) - \lambda \log \pi_\theta(x_k \mid \obs_k)\bigr) + \gamma^{\,N+1} Q_\phi(\obs_N, x_N)$
  \For{each critic epoch over minibatches of the rollout}
    \State update critic parameters $\phi$ using $\bigl(Q_\phi(\obs_t, x_t) - y_t\bigr)^2$, or if distributional using HL-Gauss \citep{pmlr-v235-farebrother24a, reppo2025}
  \EndFor
  \State \textbf{Calculate Q-scores:} $\nabla_x Q_\phi(\obs, X_1)$ at the rollout actions $X_1 \sim \pi_{\bar\theta}(\cdot \mid \obs)$
  \For{each actor epoch over minibatches of the rollout }
    \State $t \sim \mathcal{U}(0,1)$, $\varepsilon \sim \gauss{\cdot}{0}{I}$, $X_t \gets \alpha(t) X_1 + \beta(t)\varepsilon$
    \State minimize over $\theta$ using \cref{eq:lagrangian}:
    \Statex \hspace{\algorithmicindent}\hspace{\algorithmicindent} $\displaystyle
    \mathbb{E}_{t, X_1, \varepsilon}\Bigl[ w(t) \Bigl\| \vel_\theta(X_t, t \mid \obs) - \tfrac{\dot\alpha(t)}{\alpha(t)} X_t - \tfrac{\cs(t)}{\lambda\,\alpha(t)} \nabla_x Q_\phi(\obs, X_1) \Bigr\|^2 \Bigr]
    + \nu \bigl( \mathcal{L}_{\mathrm{TR}}(\theta, \bar\theta) - \varepsilon_{\mathrm{TR}} \bigr)$
    \Statex \hspace{\algorithmicindent}\hspace{\algorithmicindent} with $\mathcal{L}_{\mathrm{TR}}$ of \cref{eq:klflow} evaluated at the same points $(X_t, t)$
    \State \textbf{duals:} 
    \Statex \hspace{\algorithmicindent}\hspace{\algorithmicindent} \textbullet\ ascend $\nu$ (\cref{eq:tr_minmax})
    \Statex \hspace{\algorithmicindent}\hspace{\algorithmicindent} \textbullet\ descend $\lambda$ on $\lambda\bigl(\mathbb{E}\bigl[\widehat{\mathcal{H}}\bigr] - \mathcal{H}^\star\bigr)$, so $\lambda$ grows while the entropy is below $\mathcal{H}^\star$
  \EndFor
  \State \textbf{update the trust-region reference:} $\bar\theta \gets \theta$
\EndFor
\State \textbf{evaluation:} integrate \cref{eq:ode} from $X_0 = 0$ (\cref{app:online-results})
\end{algorithmic}
\end{algorithm}

\subsection{Experimental details}
\label{app:online-exp}

\paragraph{Environments and tasks.}
In our online on-policy RL experiments from \cref{sec:experiments} we have evaluated \method{} on the GPU-accelerated MuJoCo Playground \citep{mujoco_playground_2025} tasks.
More concretely, we distinguish between the 20 DeepMind Control (DMC) tasks of the suite, whose episode returns are bounded between $[0, 1000]$, and the four MJX humanoid joystick tasks, which require precise and high-dimensional whole-body control and whose returns are not normalized.

For every learning curve, we follow \citet{agarwal2021deep} and report the interquartile mean (IQM) of the return with stratified bootstrapped $95\%$ confidence intervals over 10 seeds per task.
Furthermore, in \cref{fig:dmc-aggregate} the aggregated results over all tasks for the Playground and the Humanoid tasks are reported, where the per-task results are shown in \cref{app:online-results}.

\paragraph{Baselines.}
We compare against Gaussian-policy methods, REPPO \citep{reppo2025}, PPO \citep{schulman2017ppo}, and SPO \citep{xie2025simple}, diffusion-policy methods, TRUDI \citep{trudi2025}, AMDP \citep{amdp2025}, DIME \citep{celik2025dime}, and DPPO \citep{ren2025dppo}, and the flow-policy method FPO \citep{mcallister2026flow}.

\paragraph{Hyperparameters.}
\Cref{tab:hyperparameters} lists the hyperparameters of \method{} and the baselines. 
Notably, for \method{} the flow starts from the prior $X_0 \sim \gauss{\cdot}{0}{\sigma_0^2 I}$ with $\sigma_0 = 1$ under the geometric schedule of \cref{eq:geom-schedule} and is integrated with $16$ Euler steps. 
Furthermore, \method{} employs action squashing using the error function map from \cref{app:maxent-prelim} and the entropy estimate \cref{eq:entropy} from \cref{sec:online} with four Rademacher probes per integration node.

\paragraph{Implementation details.}

\emph{Distributional critic.} As done in most baselines and related works recently, we parameterize the critic distributionally following HL-Gauss \citep{pmlr-v235-farebrother24a}, trained by cross-entropy on the same targets as \cref{eq:N-step-return}.

\emph{Actor regression.} The implemented policy-improvement step is the \method{} regression \cref{eq:fp-tfm-loss} with the regression target replaced by the variance-optimal mixture of \cref{eq:velmix} and the time weighting fixed to $w(t) = 1/\cs^2(t)$,
\begin{equation}
  \mathcal{L}(\theta, \bar\theta)
  \;=\; \Eintt{\pi_{\bar\theta}}\Big[ w(t)\,
  \big\| \vel_\theta(X_t, t \mid \obs) - Y_{c^\star}(X_1, \varepsilon, t)
  \big\|^2 \Big],
  \qquad
  w(t) \;=\; \frac{1}{\cs^2(t)},
  \label{eq:impl-actor}
\end{equation}
where $Y_{c^\star}$ is the velocity target \cref{eq:velmix} at the variance-optimal mixing schedule of \cref{eq:velmix-cstar}. 
The variance-optimal mixing schedule is given as $c^\star(t) = \alpha^2(t)/\big(\alpha^2(t) + \beta^2(t)\,\bar{\mathcal I}\big)$ (\cref{eq:velmix-cstar}), but we fixed the Fisher scale to $\bar{\mathcal I} = 1$, such that we obtain $c^\star(t) = \alpha^2(t)/\big(\alpha^2(t) + \beta^2(t)\big)$.
Since the generated actions are squashed (\cref{app:maxent-prelim}), the Q-score $\nabla_x Q(\obs, X_1)/\lambda$ in $Y_{c^\star}$ becomes
$\nabla_x \big[Q(\obs, g(x))/\lambda
+ \log|{\det \partial g(x)}|\big]\big|_{x = X_1}$, which is the score of the squashed targets. 
For $c^\star = 1$, \cref{eq:impl-actor} is \cref{eq:fp-tfm-loss}.
Please note that multiplying the residual by $\sqrt{w(t)} = 1/\cs(t)$ converts the velocity
difference into a score difference through the bridge \cref{eq:app-bridge}, so \cref{eq:impl-actor} is the score-field regression of \cref{app:targets}. The regression \cref{eq:impl-actor} in its score-field form is 
\begin{multline}
  \mathcal{L}(\theta, \bar\theta)
  \;=\; \Eintt{\pi_{\bar\theta}}\bigg[\,
  \Big\| \frac{\alpha(t)\,\bigl(\vel_\theta(X_t, t \mid \obs)
  - \dot\alpha(t)\, X_1\bigr) + a(t)\,\varepsilon}{\alpha(t)\,\cs(t)}
  \\
  - \frac{c^\star(t)}{\alpha(t)}\,
  \nabla_x \Big[\tfrac{1}{\lambda} Q\big(\obs, g(x)\big)
  + \log|{\det \partial g(x)}|\Big]\Big|_{x = X_1} \Big\|^2 \,\bigg],
  \label{eq:impl-scorefield}
\end{multline}
with the coefficient $a(t) := -\big( c^\star \dot\alpha\beta + (1 - c^\star)\,\alpha\dot\beta \big)$.
However, since $1/\cs(t)$ diverges as $t \to 1$ for the linear schedule and $1/\alpha(t)$ diverges as $t \to 0$, the implementation softens the two denominators of \cref{eq:impl-scorefield}, $\alpha\cs \to \alpha\cs +
\delta\sigma_0^2$ and $\alpha \to \alpha + \delta$, by the constant
$\delta = 0.05$.

\emph{Deterministic evaluation.} During evaluation, we found that starting the flow's integration from the central point $X_0 = 0$ instead of a random prior draw yields improved performance (\cref{app:online-results}); training rollouts keep stochastic prior sampling.

\paragraph{Interpolant schedule.}
We instantiate the schedules $\alpha,\beta$ with a \emph{geometric}
schedule. Given interpolation scales $\sigma_{\max}>\sigma_{\min}>0$ with
ratio $r=\sigma_{\max}/\sigma_{\min}$, we write
$\tau(t)=\sigma_{\max}^2\bigl(1-r^{-2t}\bigr)$ and
$\tau_T=\sigma_{\max}^2-\sigma_{\min}^2$, and set
\begin{equation}
  \alpha(t)\;=\;\frac{\tau(t)}{\tau_T}\;=\;\frac{1-r^{-2t}}{1-r^{-2}},
  \qquad
  \beta(t)\;=\;\sqrt{\tau_T-\tau(t)}\;=\;\sigma_{\max}\sqrt{r^{-2t}-r^{-2}}.
  \label{eq:geom-schedule}
\end{equation}
The boundary conditions of the one-sided interpolant hold exactly since
$\tau(0)=0$ and $\tau(1)=\sigma_{\max}^2(1-r^{-2})=\tau_T$, which gives
$\alpha(0)=0$, $\alpha(1)=1$, $\beta(1)=0$, and a prior scale of
$\sigma_0=\beta(0)=\sqrt{\smash[b]{\tau_T}}$.
In our
experiments the geometric schedule led to slightly better results
(\cref{fig:schedule-comparison,fig:schedule-dmc-grid,fig:schedule-humanoid-grid}) than the
linear schedule $\alpha(t)=t$, $\beta(t)=\sigma_0(1-t)$. All runs
use $\sigma_0=1$ and $r=2.5$, corresponding to
$\sigma_{\max}=\sigma_0/\sqrt{1-r^{-2}}\approx1.09$ and
$\sigma_{\min}=\sigma_{\max}/r\approx0.44$, identical across all task
suites.

\begin{table}[!htbp]
\centering
\caption{Hyperparameters of \method{} and of the online baselines.}
\label{tab:hyperparameters}
\resizebox{\textwidth}{!}{%
\begin{tabular}{lcccccccc}
\toprule
Parameter & \shortstack{\method{}\\(ours)} & AMDP & REPPO & DIME & SPO & PPO & FPO & DPPO \\
\midrule
\multicolumn{9}{l}{\textit{Actor Network}} \\
Hidden dim.       & 512  & 512  & 512  & 512  & 256  & 256  & 32   & 256  \\
Hidden layers     & 3    & 3    & 3    & 3    & 3    & 3    & 5    & 3    \\
Activation        & GeLU & GeLU & GeLU & GeLU & ELU  & ELU  & SiLU & Mish \\
Flow/diff.\ steps & 16   & 16   & N/A  & 16   & N/A  & N/A  & 10   & 8    \\
\midrule
\multicolumn{9}{l}{\textit{Critic Network}} \\
Critic type   & \shortstack{Dist.\\(HL-Gauss)} & \shortstack{Dist.\\(HL-Gauss)}
              & \shortstack{Dist.\\(HL-Gauss)} & N/A & MSE & MSE & MSE & MSE \\
Hidden dim.   & 512      & 512      & 512      & 512 & 256 & 256 & 256 & 256 \\
Hidden layers & 3        & 3        & 3        & 3   & 3   & 3   & 5   & 3   \\
Ensemble/bins & 151 bins & 151 bins & 151 bins & 100 & 1   & 1   & 1   & 1   \\
Activation    & GeLU     & GeLU     & GeLU     & GeLU & ELU & ELU & SiLU & Mish \\
\midrule
\multicolumn{9}{l}{\textit{Optimization}} \\
Optimizer      & Adam & Adam & Adam & Adam & Adam & Adam & Adam & Adam \\
Actor LR       & $3{\times}10^{-4}$ & $3{\times}10^{-4}$ & $3{\times}10^{-4}$
               & $3{\times}10^{-4}$ & $1{\times}10^{-3}$ & $1{\times}10^{-3}$
               & $3{\times}10^{-4}$ & $3{\times}10^{-4}$ \\
Critic LR      & $3{\times}10^{-4}$ & $3{\times}10^{-4}$ & $3{\times}10^{-4}$
               & $3{\times}10^{-4}$ & $1{\times}10^{-3}$ & $1{\times}10^{-3}$
               & $3{\times}10^{-4}$ & $3{\times}10^{-4}$ \\
Max grad.\ norm & 0.5 & 0.5 & 0.5 & N/A & 1.0 & 1.0 & 1.0 & 1.0 \\
Constraint     & \shortstack{$\epsilon=0.1$\\KL} & \shortstack{$\epsilon=0.1$\\KL}
               & \shortstack{$\epsilon=0.1$\\KL} & N/A
               & \shortstack{$\epsilon=0.2$\\Penalty} & \shortstack{$\epsilon=0.2$\\Clip}
               & \shortstack{$0.2$\\Clip} & \shortstack{$0.2$\\Clip} \\
\midrule
\multicolumn{9}{l}{\textit{Other Settings}} \\
Discount$^{\dagger}$ & $0.99$\,/\,$0.97$ & $0.99$\,/\,$0.97$ & $0.99$\,/\,$0.97$ & $0.99$\,/\,$0.97$ & $0.99$\,/\,$0.97$ & $0.99$\,/\,$0.97$ & $0.99$\,/\,$0.97$ & $0.99$\,/\,$0.97$ \\
Batch size     & 1024   & 1024   & 1024   & 1024    & 1024 & N/A & N/A & N/A \\
Buffer size    & 131072 & 131072 & 131072 & 131072 & N/A  & N/A & N/A & N/A \\
Target entropy & $-0.1\,\mathrm{dim}(\mathcal A)$ & $-1.0\,\mathrm{dim}(\mathcal A)$
               & $-0.5\,\mathrm{dim}(\mathcal A)$ & $-4\,\mathrm{dim}(\mathcal A)$
               & N/A & N/A & N/A & N/A \\
\bottomrule
\end{tabular}}%
\par\smallskip
{\raggedright\footnotesize $^{\dagger}$\,All methods use discount $0.99$ on the
MuJoCo Playground DMC tasks and $0.97$ on the humanoid locomotion tasks
(G1/T1 Joystick); all other \method{} hyperparameters are identical across the two
suites.\par}
\end{table}

\subsection{Further experiments and design ablations}
\label{app:online-results}
\Cref{fig:dmc-grid} and \cref{fig:humanoid-grid} show the per-task learning curves on the 20 DMC tasks and the four humanoid tasks, respectively.
An aggregation over all environments can be found in \cref{fig:dmc-aggregate}.

\paragraph{Interpolant schedule.} As detailed in \Cref{app:online-exp}, we use a geometric interpolant schedule for the on-policy RL setting. 
We refer to \Cref{app:online-exp} for the exact parameter setting. 
A valid and intuitive interpolant is the linear schedule as described in \Cref{sec:prelim}. 
However, the schedule type's effect on performance is unclear, which is why we have conducted thorough experiments and report the aggregated results over the 20 Playground tasks together with their final performances and the Humanoid tasks together with their final performances in \cref{fig:schedule-comparison}. 
In \Cref{fig:schedule-dmc-grid}, we report the per-task results of the Playground suite, and the per-task results on the humanoid environments are shown in \Cref{fig:schedule-humanoid-grid}.
While there seems to be only a minor, negligible difference in the aggregated results for the Playground tasks (\Cref{fig:schedule-comparison} (left)), a clear difference can be seen in favor of the geometric schedule in the aggregated results for the humanoid tasks (\Cref{fig:schedule-comparison} (right)). 
However, our experiments show that both schedules lead to a reasonably well-performing policy on average. 

\begin{figure}[!htbp]\centering
  \includegraphics[width=\textwidth]{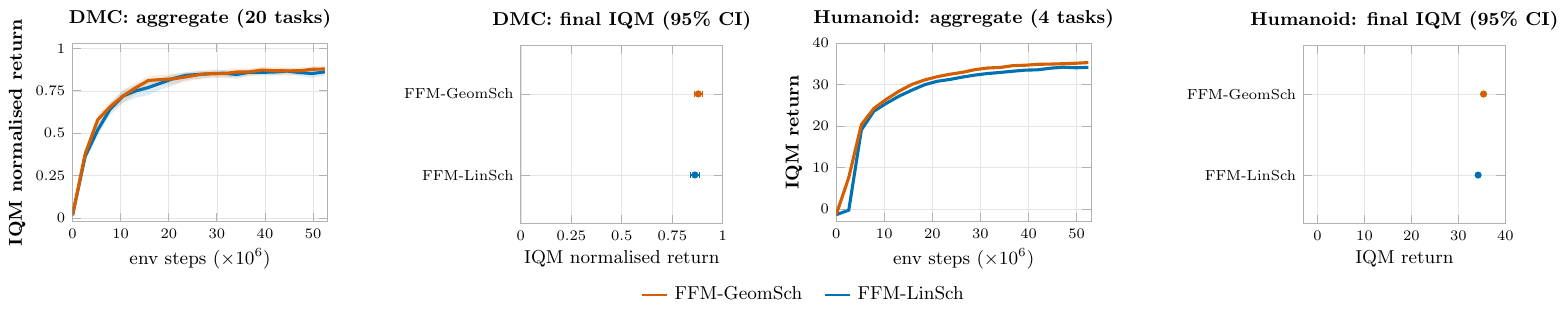}
  \caption{\textbf{Ablation. Geometric versus linear interpolant schedule. } As discussed in \cref{app:online-results}, \method{} is not restricted to a specific interpolation schedule type such as the linear schedule. 
  Motivated by diffusion-based methods, we have compared the performance of a linear schedule to the geometric schedule as described in \cref{app:online-results} and report the aggregated results over all 20 MJX Playground tasks (left two panels) and the four humanoid tasks (right two panels), where every single task was run on $10$ seeds, and we report the IQM and the stratified bootstrapped $95\%$ confidence intervals \citep{agarwal2021deep}.
  While the results indicate that there is barely a performance difference in the Playground tasks, a slight improvement on the humanoid tasks can be seen, which is why we have chosen the geometric schedule as default for the online RL tasks. See \cref{fig:schedule-dmc-grid} and \cref{fig:schedule-humanoid-grid} for the per-task results.}
  \label{fig:schedule-comparison}
\end{figure}

\begin{figure}[!tbp]\centering
  \includegraphics[width=\textwidth]{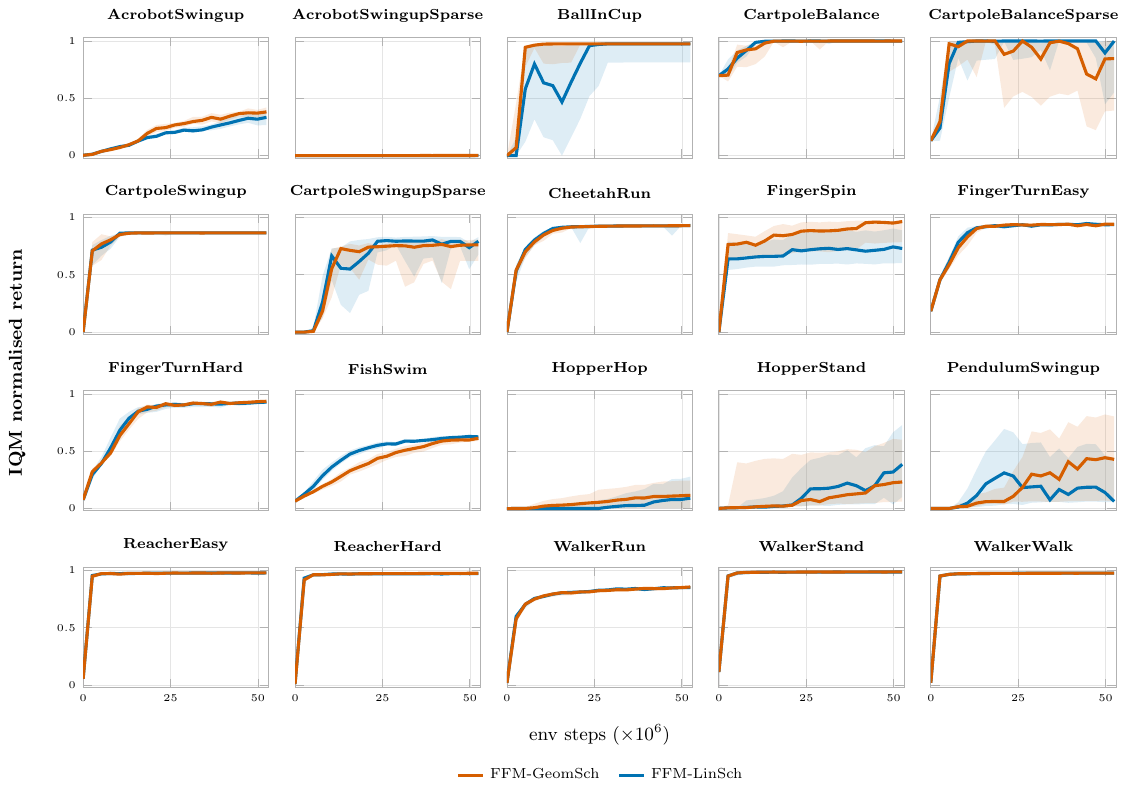}
  \caption{\textbf{Ablation. Geometric versus linear interpolant schedule (MJX Playground).} We plot the IQM per-task learning curves with $95\%$ stratified bootstrapped confidence intervals \citep{agarwal2021deep} over $10$ seeds on the 20 MJX Playground tasks. 
  The performance between the schedule types is the same for most tasks and differs only for some tasks such as \textit{FingerSpin} or \textit{PendulumSwingup}, which, however, lie within the confidence intervals. 
  Therefore, there is no statistically clear result on which interpolant should be used for the Playground tasks.} 
  \label{fig:schedule-dmc-grid}
\end{figure}

\begin{figure}[!tbp]\centering
  \includegraphics[width=\textwidth]{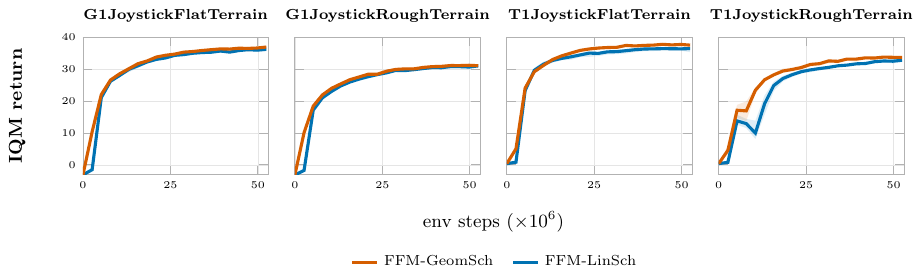}
  \caption{\textbf{Ablation. Geometric versus linear interpolant schedule (MJX Humanoid).} We plot the IQM per-task learning curves with $95\%$ stratified bootstrapped confidence intervals \citep{agarwal2021deep} over $10$ seeds on the four MJX humanoid tasks. 
  While for the \textit{G1JoystickFlatTerrain} and for the \textit{G1JoystickRoughTerrain} tasks there is no difference in performance, it seems that the geometric schedule performs slightly better on the \textit{T1} tasks, which is why we have chosen the geometric schedule as default for the online RL experiments. 
  }
  \label{fig:schedule-humanoid-grid}
\end{figure}

\paragraph{Number of integration steps.} We further analyze the effect of the number of integration steps on the humanoid tasks' performance. 
\Cref{fig:online-ablations} (left) shows the results aggregated over the four MJX humanoid
tasks for different numbers of Euler integration steps $K$. 
Please note that we only vary $K$ and keep everything else as detailed in \Cref{tab:hyperparameters}. The corresponding per-task curves to the aggregated results are shown in \Cref{fig:step-ablation-grid}.
As expected, the performance increases with the number of integration steps, since the integration
error of the flow decreases. 
However, we observe that performance plateaus at $K=16$, where increasing to $K=32$ does not improve performance and yields similar results. 
Based on this observation and the baseline's hyperparameters, we use $K = 16$ throughout the paper for a fair comparison against the other diffusion- and flow-based methods (\cref{tab:hyperparameters}).

\begin{figure}[!htbp]\centering
  \includegraphics[width=0.508\textwidth]{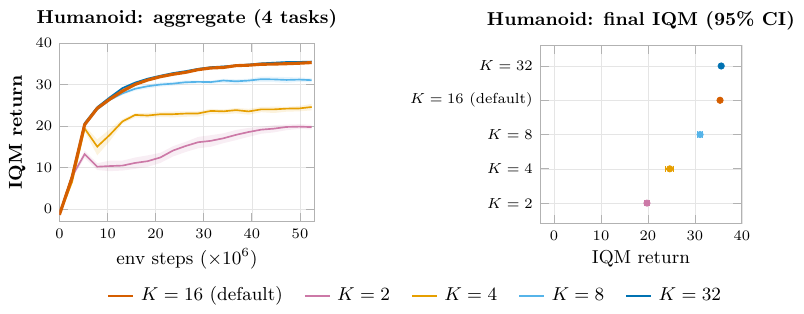}\hfill
  \includegraphics[width=0.486\textwidth]{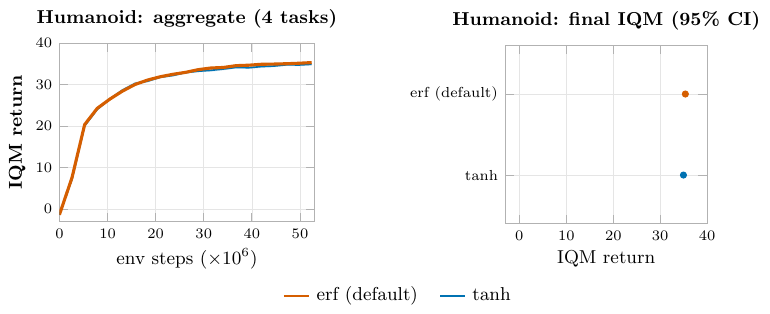}
  \caption{\textbf{Ablation. Number of Integration Steps (left two panels) and Action Squashing mapping (right two panels).} \textit{Number of integration steps (left two panels):} We analyze the effect of varying integration steps and report the aggregated results over all four humanoid tasks (left two panels). 
  Every single task was run on $10$ seeds, and we report the IQM and the $95\%$ stratified bootstrapped confidence intervals \citep{agarwal2021deep}.
  The results indicate that with an increasing number of integration steps, the performance increases until it plateaus from $K=16$ onwards, which is why we have chosen $K=16$ as default in our experiments. 
  Please see \cref{fig:step-ablation-grid} for the per-task learning curves.
  \textit{Action Squashing Mapping (right two panels):} We analyze the effect of utilizing the $\mathrm{tanh}$ and the $\mathrm{erf} $ as squashing functions for mapping the generated actions to a defined action bound. 
  We have run experiments on the four MJX humanoid tasks and report the aggregated IQM with $95\%$  stratified bootstrapped confidence intervals ($10$ seeds per task) \citep{agarwal2021deep} in the two right panels.
  The results show that \method{} is not sensitive to the choice of the squashing function. 
  Please see \cref{fig:squash-ablation-grid} for the per-task learning curves.}
  \label{fig:online-ablations}
\end{figure}

\begin{figure}[!htbp]\centering
  \includegraphics[width=\textwidth]{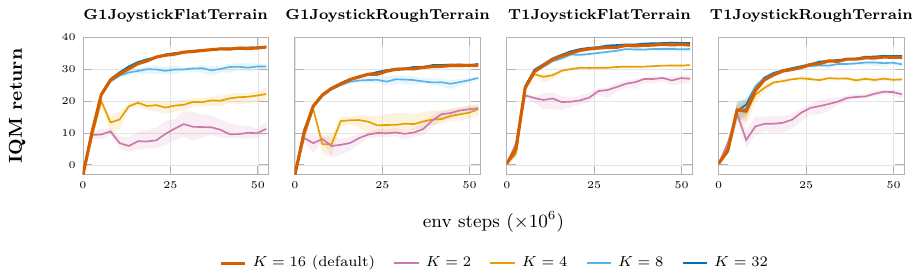}
  \caption{\textbf{Ablation. Number of Integration Steps.} We plot the IQM per-task learning curves with $95\%$ stratified bootstrapped confidence intervals \citep{agarwal2021deep} over $10$ seeds on the four MJX humanoid tasks for varying numbers of integration steps $K$. 
  With increasing number of integration steps, the performance improves. 
  However, the performance plateaus at $K=16$ steps.}
  \label{fig:step-ablation-grid}
\end{figure}

\paragraph{Action squashing.} A key factor for stable training in actor-critic-based RL is that the generated actions need to be squashed to a valid region to avoid querying the critic at unreasonable action regions during the policy update. 
As detailed in \cref{app:maxent}, there are two distinctive choices for the squashing function, namely, the $\mathrm{tanh}$ and the $\mathrm{erf}$ functions. 
To assess the effect on the performance of the design choices, we have run \method{} with a $\mathrm{tanh}$ squashing function while keeping all other design choices the same and report the aggregated learning curves on the humanoid tasks in \cref{fig:online-ablations}(right), together with the final returns. 
The per-task results are shown in \cref{fig:squash-ablation-grid}. 
From the results, it becomes clear that both variants perform on par, indicating that \method{} is not sensitive to this choice and that either function is a valid default. 
\begin{figure}[!htbp]\centering
  \includegraphics[width=\textwidth]{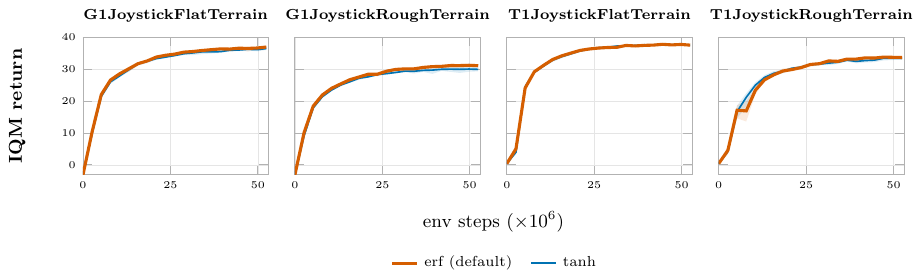}
  \caption{\textbf{Ablation. Action squashing mapping. } We plot the IQM per-task learning curves with $95\%$ stratified bootstrapped confidence intervals \citep{agarwal2021deep} over $10$ seeds on the four MJX humanoid tasks for the $\mathrm{tanh}$ and $\mathrm{erf}$ squashing functions. 
  The results indicate that \method{} is insensitive to the choice of the squashing function.
}
  \label{fig:squash-ablation-grid}
\end{figure}

\paragraph{Deterministic evaluation.} During our experiments, we observed that starting the flow's integration always at the central point $X_0=0$ during evaluation only, yields improved performance against the standard case, where $X_0\sim \mathcal{N}(0, I)$ is drawn from a standard normal distribution as done during training. 
This observation is depicted in \cref{fig:stoch-eval}, where we plot the aggregated learning curves on the Playground tasks together with their final returns (left) and the aggregated learning curves on the humanoid tasks together with their final returns (right). 
The per-task learning curves are shown in \cref{fig:stoch-eval-dmc-grid} and
\cref{fig:stoch-eval-humanoid-grid} respectively. 
While the improvement is not very obvious for the Playground tasks (\cref{fig:stoch-eval} (left)), a clear and significant performance boost in the humanoid tasks can be observed (\cref{fig:stoch-eval} (right)). 
This result indicates that, for high-dimensional tasks in particular, removing the stochasticity from the action generation process yields improved performance. 
While this observation is very interesting and worth a more detailed analysis, in particular from a theoretical point of view, we leave this analysis for future work and apply the deterministic evaluation throughout our experiments. 

\begin{figure}[!htbp]\centering
  \includegraphics[width=\textwidth]{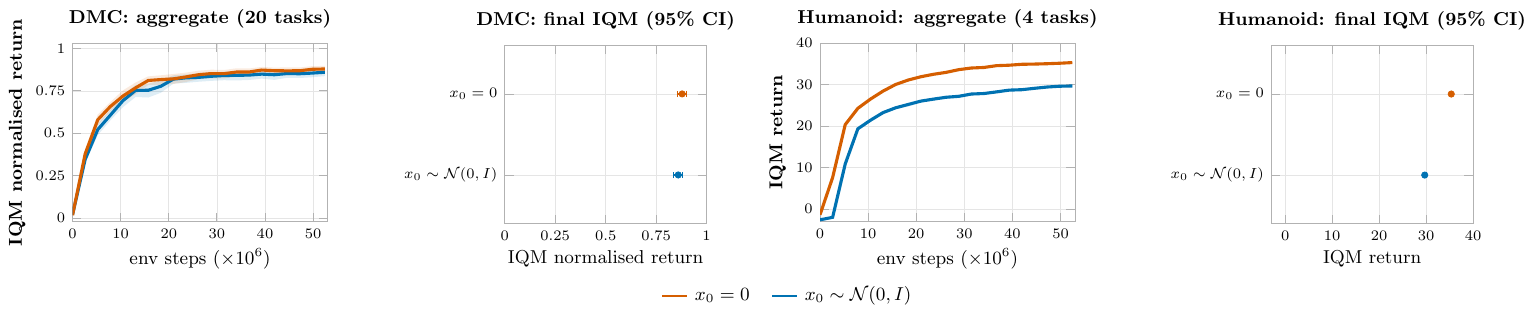}
  \caption{\textbf{Ablation. Deterministic evaluation.} We compare the effect of evaluating \method{} using $X_0 = 0$ (deterministic) and $X_0 \sim \mathcal{N}(0, I)$ (stochastic) on the 20 MJX Playground (left two panels) and four humanoid tasks (right two panels), where every task was run on $10$ seeds, and we plot the aggregated IQM with stratified bootstrapped $95\%$ confidence intervals \citep{agarwal2021deep}. 
  While there is almost no performance gap between both variants for the Playground tasks, a significant performance boost follows from the deterministic evaluation on the humanoid tasks. 
  This indicates that deterministic evaluation matters most for high-dimensional tasks. 
  Please see \cref{fig:stoch-eval-dmc-grid} and \cref{fig:stoch-eval-humanoid-grid} for the per-task performances of each task suite.
}
  \label{fig:stoch-eval}
\end{figure}

\begin{figure}[!htbp]\centering
  \includegraphics[width=\textwidth]{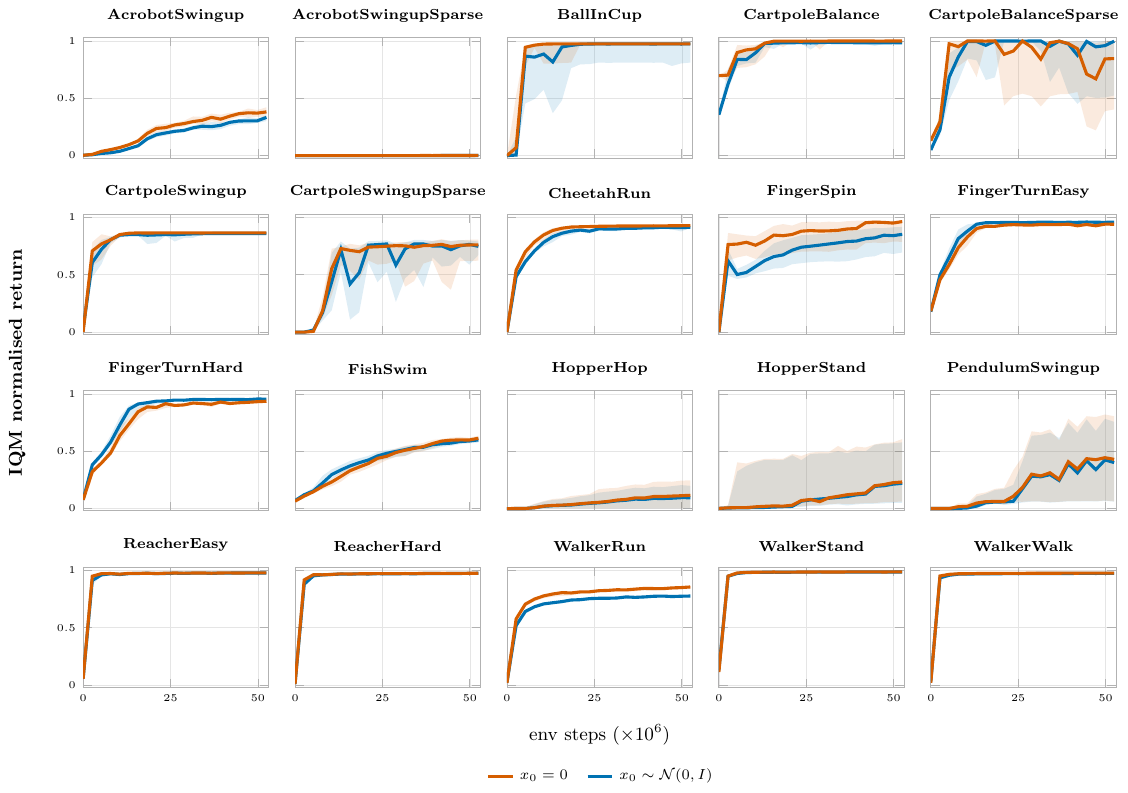}
  \caption{\textbf{Ablation. Deterministic evaluation (MJX Playground).} We plot the IQM per-task learning curves with $95\%$ stratified bootstrapped confidence intervals \citep{agarwal2021deep} over $10$ seeds on each of the 20 MJX Playground tasks for $X_0=0$ (deterministic) and $X_0 \sim \mathcal{N}(0, I)$ (stochastic) during evaluation.
  While most of the results indicate that there are not many differences, for \textit{FingerSpin}, or \textit{WalkerRun}, deterministic evaluation seems to improve performance. }
  \label{fig:stoch-eval-dmc-grid}
\end{figure}

\begin{figure}[!htbp]\centering
  \includegraphics[width=\textwidth]{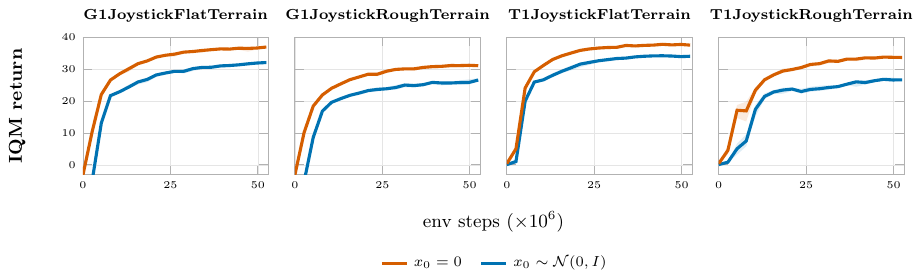}
  \caption{\textbf{Ablation. Deterministic evaluation (MJX Humanoids).} We plot the IQM per-task learning curves with $95\%$ stratified bootstrapped confidence intervals \citep{agarwal2021deep} over $10$ seeds on each of the four MJX humanoid tasks for $X_0=0$ (deterministic) and $X_0 \sim \mathcal{N}(0, I)$ (stochastic) during evaluation.
  For all four tasks, deterministic evaluation shows significantly improved performance, indicating that stochasticity can harm performance, in particular for high-dimensional action spaces. We use deterministic evaluation throughout all of our experiments. }
  \label{fig:stoch-eval-humanoid-grid}
\end{figure}

\paragraph{Correlated exploration.} A flow policy's key benefit is that the action generation process is a deterministic mapping up to the sampling of the starting sample. 
As discussed in \cref{sec:online}, this allows us to apply a correlated exploration scheme, where the exploration is shaped by correlating the prior draws across the environment steps. 
To analyze the additional hyperparameter $\tauou$ that is introduced by \cref{eq:ou}, we have conducted experiments on single Playground tasks, where the initial instantiation of \method{}, i.e., without correlated exploration, performs unsatisfactorily (see \cref{fig:dmc-grid}). 
\Cref{fig:ou-grid} shows the learning curves on a subset of tasks that we have picked from the task suite to analyze the effect on performance for varying $\tauou$ values. 
We can observe that for each task, there exists at least one $\tauou$ value that outperforms the initial case without correlated exploration, and only for the \textit{FingerSpin} and \textit{WalkerRun} tasks, we observe that a too-high value yields decreased performance. 
Especially for the \textit{AcrobotSwingupSparse, HopperHop} and \textit{HopperStand} tasks, correlated exploration yields a significant improvement over the case without correlated exploration. 
However, the results also indicate that there is no global value for $\tauou$ that performs best across every task, which makes tuning this hyperparameter difficult and is only possible if enough compute resources are available. 
Nevertheless, we believe that correlated exploration is a useful feature that \method{} can leverage. 

\begin{figure}[!htbp]\centering
  \includegraphics[width=\textwidth]{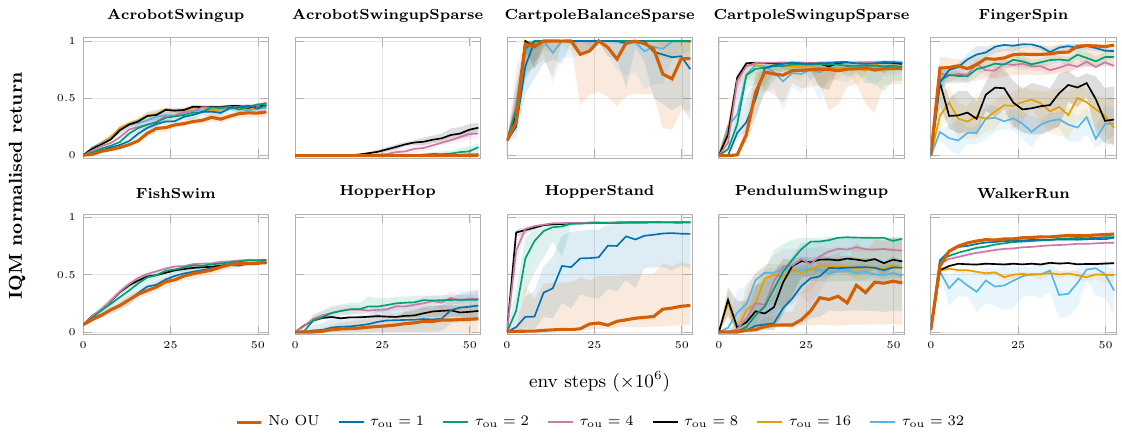}
  \caption{\textbf{Ablation. Correlated Exploration.} On a subset of the MJX Playground tasks, we analyze the effect of correlated exploration by varying $\tauou \in \{1, 2, 4, 8, 16, 32\}$, where we have run every curve on $10$ seeds and report the IQM with $95\%$ stratified bootstrapped confidence intervals \citep{agarwal2021deep}. 
  Correlated exploration can significantly boost \method{}'s performance for the right choice of $\tauou$. For example, the \textit{HopperStand} task is solved, while \method{} without correlated exploration struggles on this task.  
}
  \label{fig:ou-grid}
\end{figure}

\section{Offline-to-online fine-tuning}
\label{app:o2o}

This appendix collects the offline-to-online instantiation of \method{} (\cref{sec:online}): the behavior-constrained setting and the behavior-tilted target (\cref{app:o2o-prelim}), the fine-tuning procedure (\cref{app:o2o-algorithm}), and the protocol, full results, and design ablations of the offline-to-online experiments of \cref{sec:experiments} (\cref{app:exp,app:o2o-results,app:ablations}). Throughout, $\velref$ and $\scoreref$ denote the velocity and score fields of the pretrained (behavior) policy $\pref$, related by the bridge \cref{eq:app-bridge}, and $\cs(t) = \tfrac{\dot\alpha(t)}{\alpha(t)}\beta^2(t) - \dot\beta(t)\beta(t)$ is the bridge coefficient \cref{eq:cs}, assumed positive on $(0,1)$.

\subsection{Further details}
\label{app:o2o-prelim}

In offline-to-online RL a fixed dataset $\mathcal D$ of transitions is available before any interaction. Two objects are pretrained on it: a \emph{behavior flow} $\pref$, fit by flow matching \cref{eq:fm} to the dataset actions (its velocity field $\velref$ is the reference of \cref{sec:online}), and a critic $Q$, fit by an offline method (here pessimistic TD, \cref{alg:o2o}). Fine-tuning then continues both while the dataset grows with online interaction. The difficulty is that an offline critic is trustworthy only near the data: its extrapolation errors off the support of $\pref$ are large and systematically optimistic, so an unconstrained improvement step chases them \citep{peng2019awr,nair2020awac,park2025fql}. The standard remedy penalizes deviation from the behavior policy: at each state, maximize $\E_{x\sim\pi}[Q(\obs,x)] - \lambda\,\kl{\pi(\cdot\mid\obs)}{\pref(\cdot\mid\obs)}$, whose maximizer is the \emph{behavior-tilted} Boltzmann policy $\piplus(x\mid\obs) \propto \pref(x\mid\obs)\,e^{Q(\obs,x)/\lambda}$ of \cref{sec:prelim} \citep{abdolmaleki2018mpo,peng2019awr,nair2020awac}. Its terminal score is
\begin{equation}
  \nabla_x\log\piplus(x\mid\obs) \;=\; \scoreref(x) + \nabla_x Q(\obs,x)/\lambda,
  \qquad
  \scoreref \;:=\; \nabla_x \log\pref(\cdot\mid\obs),
  \label{eq:tilted-score}
\end{equation}
so relative to the pure Boltzmann target of \cref{sec:tfm} exactly one new ingredient enters: the reference's terminal score. For a flow-parameterized reference it is available without a density and without the dataset, by reading the score off the pretrained velocity field through the bridge \cref{eq:app-bridge-inv} near $t=1$, which is \cref{eq:refscore-bridge}. Two mechanisms then keep the fine-tuned policy close to $\pref$, and \cref{alg:o2o} combines them: the \emph{target} itself, which by \cref{app:proofs} has the tilt as a fixed point, and the trust region of \cref{sec:online}, here taken with respect to the pretrained flow, that is, $\mathcal{L}_{\mathrm{TR}}(\theta, \mathrm{ref}) \le \varepsilon_{\mathrm{TR}}$ with $\velref$ in place of $\vel_{\bar\theta}$ in \cref{eq:klflow}, which bounds every step.

\subsection{Pseudocode}
\label{app:o2o-algorithm}

\cref{alg:o2o} summarizes the fine-tuning procedure. It alternates a pessimistic TD update of the critic ensemble with the behavior-tilted \method{} step of \cref{sec:online}: clean actions are generated by the current flow, labeled with the score $g$ of the behavior-tilted target, noised, and regressed onto by \cref{eq:o2o-target}, while the trust region \cref{eq:klflow}, taken with respect to the pretrained flow $\velref$ instead of the previous policy, is enforced through the Lagrangian \cref{eq:lagrangian} and the dual update of \cref{eq:tr_minmax}. We write $\mathcal{L}_{\mathrm{TR}}(\theta, \mathrm{ref})$ for \cref{eq:klflow} with $\velref$ in place of $\vel_{\bar\theta}$, $\bar\theta$ for a frozen copy of the current actor parameters, and $\bar\phi$ for the EMA target parameters of the critic. The implementation differs from the pseudocode in the form of the actor regression, which uses the variance-reduced target of \cref{app:targets}; this and the remaining implementation details are given in \cref{app:exp}.

\begin{algorithm}[H]
\caption{Offline-to-online fine-tuning with an \method{} actor}
\label{alg:o2o}
\begin{algorithmic}[1]
\State \textbf{input:} Pretrained velocity field $\velref$ with terminal law $\pref$, critic ensemble $Q_{\phi_{1:N}}$ with EMA targets $Q_{\bar\phi_{1:N}}$, dataset $\mathcal{D}$, temperature $\lambda$, pessimism $\varrho$, trust-region budget $\varepsilon_{\mathrm{TR}}$ with multiplier $\nu$ and dual step size $\eta_\nu$, schedules $\alpha, \beta$
\State \textbf{initialization:} warm start $\vel_\theta \gets \velref$, set $\nu \gets \nu_0$
\For{each step, first offline on $\mathcal{D}$, then online}
  \State \textbf{interaction (online phase only, no gradients):}
  \Statex \hspace{\algorithmicindent}\hspace{\algorithmicindent} \textbullet\ act $x \sim \pi_\theta(\cdot \mid \obs)$ by integrating \cref{eq:ode}
  \Statex \hspace{\algorithmicindent}\hspace{\algorithmicindent} \textbullet\ add $(\obs, x, r, \obs')$ to $\mathcal{D}$
  \State \textbf{policy evaluation:} pessimistic TD targets on a minibatch of $\mathcal{D}$ with $x' \sim \pi_\theta(\cdot \mid \obs')$ (target gradients stopped)
  \Statex \hspace{\algorithmicindent}\hspace{\algorithmicindent} $\displaystyle y = r + \gamma\,\bigl(\mathrm{mean}_j\, Q_{\bar\phi_j}(\obs', x') - \varrho\,\mathrm{std}_j\, Q_{\bar\phi_j}(\obs', x')\bigr)$
  \State update critic parameters $\phi_{1:N}$ using $\sum_j \bigl(Q_{\phi_j}(\obs, x) - y\bigr)^2$, then the EMA targets $\bar\phi_{1:N}$
  \State \textbf{calculate scores (no gradients):}
  \Statex \hspace{\algorithmicindent}\hspace{\algorithmicindent} \textbullet\ clean actions $X_1 \sim \pi_{\bar\theta}(\cdot \mid \obs)$ by integrating \cref{eq:ode}
  \Statex \hspace{\algorithmicindent}\hspace{\algorithmicindent} \textbullet\ reference score $\scoreref(X_1)$ read off $\velref$ by \cref{eq:refscore-bridge}
  \Statex \hspace{\algorithmicindent}\hspace{\algorithmicindent} \textbullet\ behavior-tilted score $g(X_1) = \scoreref(X_1) + \tfrac{1}{\lambda}\nabla_x Q(\obs, X_1)$
  \State $t \sim \mathcal{U}(0,1)$, $\varepsilon \sim \gauss{\cdot}{0}{I}$, $X_t \gets \alpha(t) X_1 + \beta(t)\varepsilon$
  \State minimize over $\theta$ using \cref{eq:lagrangian} with $\mathcal{L}_{\mathrm{BT-FFM}}$ of \cref{eq:o2o-target}:
  \Statex \hspace{\algorithmicindent}\hspace{\algorithmicindent} $\displaystyle
  \mathbb{E}_{t, X_1, \varepsilon}\Bigl[ \Bigl\| \vel_\theta(X_t, t \mid \obs) - \tfrac{\dot\alpha(t)}{\alpha(t)} X_t - \tfrac{\cs(t)}{\alpha(t)}\, g(X_1) \Bigr\|^2 \Bigr]
  + \nu \bigl( \mathcal{L}_{\mathrm{TR}}(\theta, \mathrm{ref}) - \varepsilon_{\mathrm{TR}} \bigr)$
  \Statex \hspace{\algorithmicindent}\hspace{\algorithmicindent} with $\mathcal{L}_{\mathrm{TR}}(\theta, \mathrm{ref})$ of \cref{eq:klflow} evaluated at the same points $(X_t, t)$
  \State \textbf{dual:} ascend $\nu \gets \max\bigl(0,\, \nu + \eta_\nu\,(\mathcal{L}_{\mathrm{TR}}(\theta, \mathrm{ref}) - \varepsilon_{\mathrm{TR}})\bigr)$ (\cref{eq:tr_minmax})
  \State \textbf{update the frozen actor copy:} $\bar\theta \gets \theta$
\EndFor
\State \textbf{return} fine-tuned velocity field $\vel_\theta$
\end{algorithmic}
\end{algorithm}

\subsection{Experimental details}
\label{app:exp}

\subsubsection{Environments and tasks}
We evaluate on the offline-to-online benchmark of OGBench \citep{park2025ogbench}, comprising ten domains that
split into \emph{locomotion} (goal-reaching navigation) and \emph{manipulation}
(goal-conditioned object rearrangement); \cref{fig:envgrid} shows a representative frame
of each. Every domain defines five \emph{singletask} goals (Task~1--5): the offline
dataset is shared across a domain's tasks (goal-free \texttt{navigate}/\texttt{play}
data) and each task fixes a goal used for the sparse reward and for evaluation.
\Cref{tab:envspecs} lists the state and action dimensionality, episode length,
dataset size, and network width for each domain.

\begin{figure}[t]\centering
  \includegraphics[width=\textwidth]{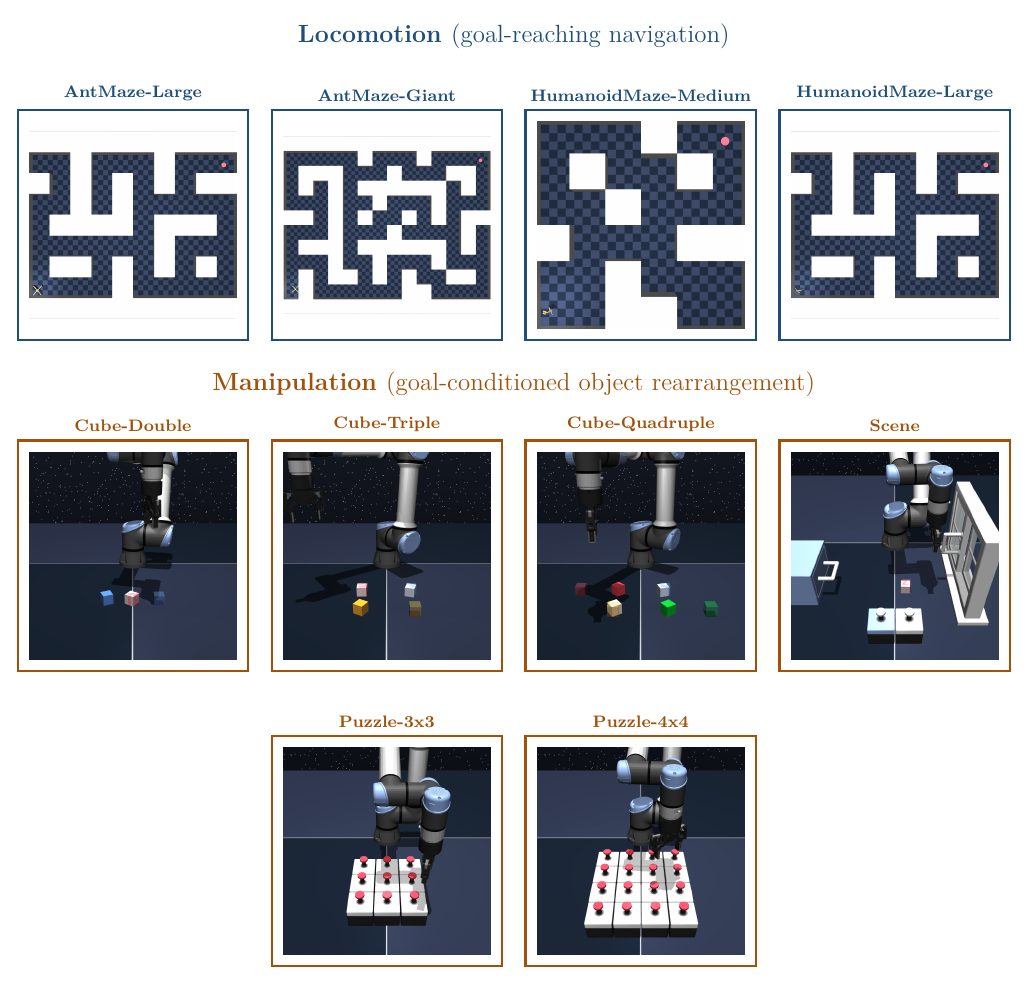}
  \caption{The ten evaluation environments (five tasks each). Top: locomotion; bottom:
  manipulation.}
  \label{fig:envgrid}
\end{figure}

\begin{table}[t]\centering\small
\caption{Environment specifications. Network width and layer norm follow the
dataset-size convention (1M: width $512$; 10M/100M: width $1024$ with actor layer norm),
applied identically to the BC-pretraining and fine-tuning networks.}
\label{tab:envspecs}
\setlength{\tabcolsep}{5pt}
\begin{tabular}{llccccc}
\toprule
Environment & Type & State & Act & Ep.\ len & Dataset & Width \\
\midrule
AntMaze-Large        & locomotion   & 29 & 8  & 1000 & 1M   & 512  \\
AntMaze-Giant        & locomotion   & 29 & 8  & 1000 & 10M  & 1024 \\
HumanoidMaze-Medium  & locomotion   & 69 & 21 & 2000 & 1M   & 512  \\
HumanoidMaze-Large   & locomotion   & 69 & 21 & 2000 & 1M   & 512  \\
Cube-Double          & manipulation & 37 & 5  & 500  & 1M   & 512  \\
Cube-Triple          & manipulation & 46 & 5  & 1000 & 10M  & 1024 \\
Cube-Quadruple       & manipulation & 55 & 5  & 1000 & 100M & 1024 \\
Scene                & manipulation & 40 & 5  & 750  & 1M   & 512  \\
Puzzle-3x3           & manipulation & 55 & 5  & 500  & 1M   & 512  \\
Puzzle-4x4           & manipulation & 83 & 5  & 500  & 10M  & 1024 \\
\bottomrule
\end{tabular}
\end{table}

\subsubsection{Offline-to-online protocol}
We follow the protocol of TRQAM \citep{dong2026trust}. For every (environment, task, seed) we first pretrain a
behavior-cloning flow policy by flow matching for $300$K gradient steps, then run $1$M
offline gradient steps followed by $500$K online environment steps ($1.5$M total); the
online phase continues from the offline actor, critic, optimizer, and dual variables.
Policies are evaluated every $50$K steps over $50$ episodes and we report the binary
task-success rate. We use the eight-seed set $\{10001, 20002, \dots, 80008\}$.

\subsubsection{Shared architecture and hyperparameters}
All methods use four-hidden-layer MLP actors and critics; the critic is an ensemble with
pessimistic backup. The discount and pessimism are $(\gamma,\varrho)=(0.995,0.5)$ except on
the long-horizon HumanoidMaze domains, where $(\gamma,\varrho)=(0.999,0.0)$.
\Cref{tab:shared} lists the shared optimization hyperparameters.

\begin{table}[t]\centering\small
\caption{Shared hyperparameters (all methods).}\label{tab:shared}
\begin{tabular}{ll}
\toprule
Optimizer & Adam, learning rate $3\times10^{-4}$ \\
Batch size & $256$ \\
MLP & 4 hidden layers, width $512$ (1M) / $1024$ (10M--100M) \\
Layer norm & critic: yes; actor: $1024$-width domains only \\
Discount $\gamma$ & $0.995$ \; (HumanoidMaze: $0.999$) \\
Pessimism $\varrho$ & $0.5$ \; (HumanoidMaze: $0.0$) \\
Target EMA rate & $0.005$ \\
Flow integration steps & $10$ \\
Action chunk length (manipulation) & $5$ \\
\bottomrule
\end{tabular}
\end{table}

\subsubsection{\method{} hyperparameters}
\method{} has a single per-domain choice, the regime of \cref{tab:tfmhp}: Regime~A on the
sparse-reward locomotion domains and Regime~B on the chunked manipulation domains, with no
per-task or per-seed tuning. All other settings are shared. The terminal reference score is
always used and read off the pretrained flow by \cref{eq:refscore-bridge} at
$t = 1 - \delta_{\mathrm{ref}}$ with $\delta_{\mathrm{ref}} = 0.1$; the endpoint softening is
$\delta = 0.05$; four interpolant samples $(t, \varepsilon)$ are drawn per clean action; and the
trust-region budget is $\varepsilon_{\mathrm{TR}} = 1$ offline and $8$ online.

\begin{table}[t]\centering\small
\caption{\method{} two-regime recipe: the only per-domain choice.}\label{tab:tfmhp}
\begin{tabular}{lll}
\toprule
 & Regime A (locomotion) & Regime B (manipulation) \\
\midrule
Critic scale $1/\lambda$     & $30$                       & $1$ \\
Fisher normalizer            & fixed ($\bar{\mathcal I}=1$) & adaptive (EMA) \\
Behavior anchor              & trust region on            & trust region off (product tilt only) \\
Action chunking              & no                         & yes ($h=5$) \\
\bottomrule
\end{tabular}
\end{table}

\paragraph{Actor update.} The implementation differs from \cref{alg:o2o} in the actor regression.
Instead of the pure \method{} target of \cref{eq:o2o-target}, it regresses onto the mixed target
$Y_c$ of \cref{eq:velmix} under the variance-optimal schedule $c^\star(t)$ of
\cref{eq:velmix-cstar}, with the behavior-tilted score $g = \scoreref + \nabla_x Q/\lambda$ in
place of $\nabla_x Q/\lambda$ and the Fisher scale $\bar{\mathcal I}$ either fixed or tracked by
an EMA (\cref{tab:tfmhp}). The residual is divided by $\cs(t)$, so that the regression is carried
out in score space (\cref{app:targets}). All offline-to-online experiments use the linear
interpolant $\alpha(t) = t$, $\beta(t) = 1 - t$, for which $\dot\alpha(t)/\alpha(t) = 1/t$ and
$\cs(t) = (1-t)/t$, and the regression line reads
\begin{equation}
  \frac{\vel_\theta(X_t,t) - Y_c}{\cs(t)}
  \;=\; \frac{t\big(\vel_\theta(X_t,t) - X_1\big) + (t - c)\,\varepsilon}{1-t} \;-\; \frac{c}{t}\, g(X_1).
  \label{eq:alg-o2o-line}
\end{equation}
The denominators $1-t$ and $t$ are softened by $\delta$, which replaces a time truncation
(\cref{app:targets}), and the trust region \cref{eq:klflow} is evaluated at the same points
$(X_t, t)$ with the weight $1/\cs(t)$ and the same softening. The clean actions $X_1$ are clipped to
the action box before the critic gradient is taken, the critic target uses the best of $n$ sampled
next actions, and the pretrained flow $\velref$ continues to be trained by flow matching
\cref{eq:fm} on the growing dataset, so that the reference tracks the data.

\paragraph{The two regimes.} The rows of \cref{tab:tfmhp} are two coordinated settings rather than
four independent knobs. In the sparse-reward mazes the critic is smooth and $\nabla_x Q$ is small,
so we amplify it with $1/\lambda = 30$. This only works if the Fisher scale is held fixed at
$\bar{\mathcal I} = 1$: an adaptive normalizer estimates $\|\nabla_x Q/\lambda\|^2$ and divides the
amplification back out. In manipulation the chunked critic gradient is informative, so
$1/\lambda = 1$ suffices, and the adaptive EMA normalizer is preferable because the gradient
magnitudes of an action chunk vary by orders of magnitude across coordinates and over training. The
reference score anchors the target to the pretrained flow in both regimes. Locomotion additionally
keeps the trust region \cref{eq:klflow}, since with sparse rewards the critic is unreliable off the
visited data and a navigation policy can otherwise drift into unseen parts of the maze. In
manipulation the product tilt and chunking already keep the actor on the data, and the trust region
only slows online improvement, so it is switched off. Action chunking with horizon $h = 5$ is part
of the protocol inherited from TRQAM and is applied to all methods on the manipulation domains
(\cref{tab:shared}); it appears in \cref{tab:tfmhp} only because it coincides with the regime split.

\subsubsection{Baselines}
We compare against TRQAM \citep{dong2026trust}, QAM and QAM-E \citep{li2026qam}, FQL and IFQL
\citep{park2025fql}, DSRL \citep{wagenmaker2025dsrl}, and CGQL-L \citep{dong2026trust}, each run in
the identical protocol above.

\subsection{Full results}
\label{app:o2o-results}
\Cref{tab:offline,tab:online} report the offline-phase (at $1$M steps) and
offline-to-online (at $1.5$M steps) success rates, aggregated per domain over the five
tasks and all seeds, with the best method per row in bold. The per-domain aggregated
learning curves are shown in \cref{fig:curves-agg} (main text), and \cref{fig:curves-grid}
gives the full per-(environment, task) grid; in both, the offline ($\le\!1$M) and online
($>\!1$M) phases are shaded. The tables corroborate the discussion of \cref{sec:experiments}:
\method{}'s offline average trails the strongest offline baseline yet its offline-to-online
average is the best overall, with the largest per-environment gains on the hard,
long-horizon tasks and no divergence across seeds.

\begin{table}[t]
\centering\small
\caption{Offline-phase success rate at the end of offline training (1M steps), mean over 5 tasks and seeds per environment. Best per row in bold.}
\label{tab:offline}
\setlength{\tabcolsep}{4pt}
\begin{tabular}{lcccccccc}
\toprule
Environment & \method{} & TRQAM & QAM & QAM-E & IFQL & FQL & DSRL & CGQL-L \\
\midrule
AntMaze-Large & \textbf{0.89} & 0.89 & 0.62 & 0.86 & 0.29 & 0.38 & 0.53 & 0.48 \\
AntMaze-Giant & 0.26 & \textbf{0.41} & 0.29 & 0.06 & 0.12 & 0.02 & 0.01 & 0.07 \\
HumanoidMaze-Medium & \textbf{0.99} & 0.84 & 0.64 & 0.59 & 0.93 & 0.74 & 0.53 & 0.57 \\
HumanoidMaze-Large & \textbf{0.84} & 0.36 & 0.04 & 0.04 & 0.30 & 0.02 & 0.01 & 0.06 \\
Cube-Double & 0.47 & \textbf{0.81} & 0.71 & 0.71 & 0.09 & 0.44 & 0.72 & 0.55 \\
Cube-Triple & 0.25 & \textbf{0.50} & 0.19 & 0.11 & 0.24 & 0.07 & 0.34 & 0.00 \\
Cube-Quadruple & 0.01 & \textbf{0.19} & 0.18 & 0.09 & 0.06 & 0.09 & 0.09 & 0.01 \\
Scene & 0.74 & 0.79 & 0.64 & 0.63 & 0.36 & 0.70 & \textbf{0.80} & 0.58 \\
Puzzle-3x3 & 0.88 & 1.00 & 0.15 & 0.89 & 0.64 & 0.25 & \textbf{1.00} & 0.00 \\
Puzzle-4x4 & 0.95 & \textbf{0.99} & 0.01 & 0.54 & 0.42 & 0.09 & 0.61 & 0.00 \\
\midrule
\textit{Average} & 0.63 & \textbf{0.68} & 0.35 & 0.45 & 0.34 & 0.28 & 0.46 & 0.23 \\
\bottomrule
\end{tabular}
\end{table}

\begin{table}[t]
\centering\small
\caption{Offline-to-online success rate at the end of online fine-tuning (1.5M total steps), mean over 5 tasks and seeds per environment. Best per row in bold.}
\label{tab:online}
\setlength{\tabcolsep}{4pt}
\begin{tabular}{lcccccccc}
\toprule
Environment & \method{} & TRQAM & QAM & QAM-E & IFQL & FQL & DSRL & CGQL-L \\
\midrule
AntMaze-Large & 1.00 & 0.98 & 0.99 & 0.96 & 0.52 & \textbf{1.00} & 0.95 & 0.97 \\
AntMaze-Giant & \textbf{0.98} & 0.97 & 0.74 & 0.97 & 0.44 & 0.84 & 0.90 & 0.61 \\
HumanoidMaze-Medium & \textbf{1.00} & 0.97 & 0.82 & 0.78 & 0.75 & 0.96 & 0.82 & 0.95 \\
HumanoidMaze-Large & \textbf{0.99} & 0.76 & 0.26 & 0.27 & 0.12 & 0.37 & 0.28 & 0.74 \\
Cube-Double & 0.99 & 1.00 & \textbf{1.00} & 1.00 & 0.72 & 1.00 & 0.99 & 0.99 \\
Cube-Triple & 0.99 & \textbf{1.00} & 0.90 & 0.98 & 0.46 & 0.87 & 0.90 & 0.20 \\
Cube-Quadruple & 0.79 & 0.78 & 0.74 & 0.77 & 0.68 & 0.79 & \textbf{0.79} & 0.70 \\
Scene & 0.79 & 0.80 & 0.80 & \textbf{0.80} & 0.46 & 0.80 & \textbf{0.80} & 0.70 \\
Puzzle-3x3 & \textbf{1.00} & \textbf{1.00} & 0.80 & \textbf{1.00} & 0.55 & \textbf{1.00} & \textbf{1.00} & 0.01 \\
Puzzle-4x4 & 0.97 & \textbf{1.00} & 0.80 & \textbf{1.00} & 0.88 & 1.00 & \textbf{1.00} & 0.05 \\
\midrule
\textit{Average} & \textbf{0.95} & 0.93 & 0.78 & 0.85 & 0.56 & 0.86 & 0.84 & 0.59 \\
\bottomrule
\end{tabular}
\end{table}

\begin{figure}[p]\centering
  \resizebox{0.92\textwidth}{!}{\input{figures/curves_grid_tikz.tex}}
  \caption{Offline-to-online success rate for every environment (rows) and task
  (columns), mean over seeds. Offline and online phases are shaded; \method{} (ours) is
  drawn in bold.}
  \label{fig:curves-grid}
\end{figure}

\subsection{Design ablations}
\label{app:ablations}
The recipe of \cref{tab:tfmhp} and the integration budget were fixed once and transferred across
all ten domains. The ablations below justify these choices. Each varies a single setting on one
task per family, \texttt{cube-double} task~3 (manipulation) and \texttt{antmaze-large} task~3
(locomotion), with all other settings at their regime values, and reports the success rate
averaged over the last $150$K steps of the respective phase as mean $\pm$ standard deviation over
three seeds. Unless stated otherwise, the runs are offline only ($1$M steps, reusing the
pretrained flow).

\paragraph{Critic scale and Fisher normalizer.}
The two left panels of \cref{fig:abl-knobs} sweep $1/\lambda \in \{1,2,4,8,30\}$ for the fixed
($\bar{\mathcal I} = 1$) and the adaptive (EMA) normalizer. On manipulation, success falls
monotonically with $1/\lambda$ (from $0.40$ to $0.00$) and the two normalizers coincide, since at
$1/\lambda = 1$ there is no amplification to cancel. On locomotion the curves cross: with the fixed
normalizer success rises with $1/\lambda$ ($0.83 \to 0.95$), with the adaptive one it falls
($0.72 \to 0.14$) because the EMA divides out the gain. The pairs $(1/\lambda{=}30, \text{fixed})$
and $(1/\lambda{=}1, \text{adaptive})$ of \cref{tab:tfmhp} are therefore the two operating points,
and no mixed combination improves on either.
A third variant, \emph{adaptive-shape}, tracks the Fisher normalizer per coordinate but rescales
it to unit mean, so that it keeps the scale invariance across action dimensions without cancelling
the overall $1/\lambda^2$ level. On locomotion it rises with $1/\lambda$ like the fixed normalizer
($0.82 \to 0.93$), and on manipulation all three variants coincide. It would remove the normalizer
row of \cref{tab:tfmhp}, but since it matches rather than improves on the per-domain choice, we keep
the explicit split.

\paragraph{Integration steps.}
The two right panels of \cref{fig:abl-knobs} vary the number of Euler steps over $\{2,5,10,25\}$.
On manipulation two steps are too coarse ($0.31$) and performance saturates from five steps on
($\approx 0.40$); locomotion is insensitive ($0.98$ at two steps, $0.94$ at $25$). The ten steps
inherited from the TRQAM/QAM protocol lie on the plateau for both families.

\begin{figure}[t]\centering
  \resizebox{\textwidth}{!}{\definecolor{aFix}{HTML}{E69F00}\definecolor{aAdp}{HTML}{0072B2}%
\definecolor{aShape}{HTML}{009E73}\definecolor{aStep}{HTML}{D55E00}%
\providecommand{\ablleg}[2]{\tikz[baseline=-0.5ex]{\draw[#1,line width=1pt](0,0)--(0.4,0);}~#2\hspace{1.1em}}
\pgfplotsset{
  ablaxis/.style={width=4.6cm,height=3.7cm, ymin=-0.04,ymax=1.04, ytick={0,0.25,0.5,0.75,1},
    tick label style={font=\tiny}, title style={font=\scriptsize\bfseries,yshift=-3pt},
    xlabel style={font=\scriptsize,yshift=3pt}, ylabel style={font=\scriptsize,yshift=-3pt},
    grid=major, grid style={gray!20,line width=0.3pt}, axis line style={gray!60},
    every tick/.style={gray!60}, error bars/error bar style={line width=0.6pt}, },
}
\begin{tikzpicture}
\begin{groupplot}[group style={group size=4 by 1, horizontal sep=0.95cm,
    yticklabels at=edge left}, ablaxis]
\nextgroupplot[title={Critic scale (manip.)}, xmode=log, log basis x=10,
  xtick={1,2,4,8,30}, xticklabels={1,2,4,8,30}, log ticks with fixed point,
  xlabel={$1/\lambda$}, ylabel={offline success}]
  \addplot[aFix,mark=square*,mark size=1.2pt,line width=0.9pt,error bars/.cd,y dir=both,y explicit] table[x=x,y=y,y error=yerr]{tikz_data/abl_cs_c2_fix.dat};
  \addplot[aAdp,mark=*,mark size=1.3pt,line width=0.9pt,error bars/.cd,y dir=both,y explicit] table[x=x,y=y,y error=yerr]{tikz_data/abl_cs_c2_adp.dat};
  \addplot[aShape,mark=diamond*,mark size=1.5pt,line width=1.2pt,error bars/.cd,y dir=both,y explicit] table[x=x,y=y,y error=yerr]{tikz_data/abl_cs_c2_sh.dat};
\nextgroupplot[title={Critic scale (loco.)}, xmode=log, log basis x=10,
  xtick={1,2,4,8,30}, xticklabels={1,2,4,8,30}, log ticks with fixed point, xlabel={$1/\lambda$}]
  \addplot[aFix,mark=square*,mark size=1.2pt,line width=0.9pt,error bars/.cd,y dir=both,y explicit] table[x=x,y=y,y error=yerr]{tikz_data/abl_cs_am_fix.dat};
  \addplot[aAdp,mark=*,mark size=1.3pt,line width=0.9pt,error bars/.cd,y dir=both,y explicit] table[x=x,y=y,y error=yerr]{tikz_data/abl_cs_am_adp.dat};
  \addplot[aShape,mark=diamond*,mark size=1.5pt,line width=1.2pt,error bars/.cd,y dir=both,y explicit] table[x=x,y=y,y error=yerr]{tikz_data/abl_cs_am_sh.dat};
\nextgroupplot[title={Integration steps (manip.)}, xmode=log, log basis x=2,
  xtick={2,5,10,25}, xticklabels={2,5,10,25}, log ticks with fixed point, xlabel={ODE steps}]
  \addplot[aStep,mark=*,mark size=1.3pt,line width=1pt,error bars/.cd,y dir=both,y explicit]
   table[x=x,y=y,y error=yerr]{tikz_data/abl_fs_c2.dat};
\nextgroupplot[title={Integration steps (loco.)}, xmode=log, log basis x=2,
  xtick={2,5,10,25}, xticklabels={2,5,10,25}, log ticks with fixed point, xlabel={ODE steps}]
  \addplot[aStep,mark=*,mark size=1.3pt,line width=1pt,error bars/.cd,y dir=both,y explicit]
   table[x=x,y=y,y error=yerr]{tikz_data/abl_fs_am.dat};
\end{groupplot}
\node[anchor=north, font=\scriptsize, inner sep=0pt] at ($(current bounding box.south)+(0,-0.22cm)$)
  {\ablleg{aFix}{fixed Fisher}\ablleg{aAdp}{adaptive Fisher}\ablleg{aShape}{adaptive-shape}\ablleg{aStep}{integration-step sweep}};
\end{tikzpicture}}
  \caption{\textbf{Critic scale and integration steps} (offline success, mean $\pm$ std over three
  seeds, one manipulation and one locomotion task). \emph{Left two panels:} fixed, adaptive, and
  adaptive-shape Fisher normalizer against $1/\lambda$. \emph{Right two panels:} number of Euler
  steps.}
  \label{fig:abl-knobs}
\end{figure}

\paragraph{Behavior anchor.}
Two mechanisms keep \method{} near the data: the reference score, which makes the \emph{product}
tilt $\pref\, e^{Q/\lambda}$ the target, and the trust region \cref{eq:klflow}. Without the
reference score the target is the \emph{pure} tilt $e^{Q/\lambda}$. \Cref{fig:abl-anchor} crosses
the two (product vs.\ pure tilt, trust region on vs.\ off) in both phases. Offline, removing both
anchors collapses the policy ($0.00$ on manipulation, $0.19$ on locomotion). Either one recovers
it: on \texttt{cube-double} the pure tilt with trust region ($0.58$) even beats the product tilt
($0.41$), and on \texttt{antmaze-large} the product tilt with trust region is best ($0.96$).
Online, at the budget $\varepsilon_{\mathrm{TR}} = 8$ of the recipe, the pure tilt collapses on
\texttt{cube-double} ($0.05$) while the product tilt reaches $0.99$; on locomotion all four
combinations converge to $\approx 1$.

\Cref{fig:abl-klsweep} sweeps the budget $\varepsilon_{\mathrm{TR}}$ on four tasks. The budget the
pure tilt needs is task-dependent in both phases and points in opposite directions. Online,
\texttt{cube-double} needs a tight budget ($1.0$ at $\varepsilon_{\mathrm{TR}} = 1$, $0.05$ at $8$)
and \texttt{puzzle-3x3} a loose one ($0.23$ at $1$, $1.0$ at $8$), so at every fixed budget the
worst task scores $\le 0.42$. Offline, \texttt{cube-double} and \texttt{scene} are best at
$\varepsilon_{\mathrm{TR}} = 1$ ($0.58$ and $1.00$, falling to $0.00$ and $0.02$ at $16$),
\texttt{puzzle-3x3} needs at least $4$ ($0.21$ at $1$, $0.98$ at $4$), \texttt{antmaze-large} is
insensitive, and tightening below $1$ does not help ($0.35$, $0.55$, $0.58$ at
$\varepsilon_{\mathrm{TR}} = 0.25$, $0.5$, $1$ on \texttt{cube-double}). The product tilt, by
contrast, is robust online: it stays $\ge 0.99$ on all four tasks for every
$\varepsilon_{\mathrm{TR}} \ge 4$, so a single budget serves all tasks, which is why the recipe uses
it. Offline it is flat on three tasks but rises from $0.19$ at $\varepsilon_{\mathrm{TR}} = 1$ to
$0.99$ at $8$ on \texttt{puzzle-3x3}; a loose offline budget is equivalent to no trust region
($0.99$ at $16$ vs.\ $1.00$ with the trust region off), which is the setting of the manipulation
regime.

\begin{figure}[t]\centering
  \resizebox{\textwidth}{!}{\definecolor{aKLoff}{HTML}{4C72B0}\definecolor{aKLon}{HTML}{DD8452}%
\providecommand{\ablbox}[2]{\tikz[baseline=-0.5ex]{\fill[#1](0,-0.07)rectangle(0.3,0.09);}~#2\hspace{1.1em}}
\begin{tikzpicture}
\begin{groupplot}[group style={group size=4 by 1, horizontal sep=0.95cm, yticklabels at=edge left},
  width=4.6cm, height=3.7cm, tick label style={font=\tiny}, title style={font=\scriptsize\bfseries,yshift=-3pt},
  ylabel style={font=\scriptsize,yshift=-3pt}, grid=major, grid style={gray!20,line width=0.3pt},
  axis line style={gray!60}, every tick/.style={gray!60}, error bars/error bar style={line width=0.6pt},
  ybar, /pgf/bar width=8pt, enlarge x limits=0.55, ymin=0, ymax=1.08, ytick={0,0.25,0.5,0.75,1},
  symbolic x coords={product,pure}, xtick=data, xticklabels={product\\tilt, pure\\tilt}, xticklabel style={align=center}]
\nextgroupplot[title={Manipulation (offline)}, ylabel={success}]
\addplot[aKLoff,fill=aKLoff,error bars/.cd,y dir=both,y explicit] coordinates {(product,0.412) +- (0,0.100) (pure,0.000) +- (0,0.000)};
\addplot[aKLon,fill=aKLon,error bars/.cd,y dir=both,y explicit] coordinates {(product,0.368) +- (0,0.082) (pure,0.582) +- (0,0.074)};
\nextgroupplot[title={Manipulation (online)}]
\addplot[aKLoff,fill=aKLoff,error bars/.cd,y dir=both,y explicit] coordinates {(product,0.992) +- (0,0.008) (pure,0.025) +- (0,0.035)};
\addplot[aKLon,fill=aKLon,error bars/.cd,y dir=both,y explicit] coordinates {(product,0.993) +- (0,0.006) (pure,0.052) +- (0,0.006)};
\nextgroupplot[title={Locomotion (offline)}]
\addplot[aKLoff,fill=aKLoff,error bars/.cd,y dir=both,y explicit] coordinates {(product,0.812) +- (0,0.030) (pure,0.185) +- (0,0.166)};
\addplot[aKLon,fill=aKLon,error bars/.cd,y dir=both,y explicit] coordinates {(product,0.965) +- (0,0.016) (pure,0.947) +- (0,0.010)};
\nextgroupplot[title={Locomotion (online)}]
\addplot[aKLoff,fill=aKLoff,error bars/.cd,y dir=both,y explicit] coordinates {(product,0.998) +- (0,0.002) (pure,0.943) +- (0,0.023)};
\addplot[aKLon,fill=aKLon,error bars/.cd,y dir=both,y explicit] coordinates {(product,1.000) +- (0,0.000) (pure,0.995) +- (0,0.004)};
\end{groupplot}
\node[anchor=north, font=\scriptsize, inner sep=0pt] at ($(current bounding box.south)+(0,-0.22cm)$)
  {\ablbox{aKLoff}{trust region off}\ablbox{aKLon}{trust region on}};
\end{tikzpicture}}
  \caption{\textbf{Behavior anchor.} Product tilt $\pref\,e^{Q/\lambda}$ vs.\ pure tilt
  $e^{Q/\lambda}$, with the trust region on or off, offline and online (mean $\pm$ std over three
  seeds; the online runs use $\varepsilon_{\mathrm{TR}} = 8$).}
  \label{fig:abl-anchor}
\end{figure}

\begin{figure}[t]\centering
  \resizebox{\textwidth}{!}{\definecolor{kC2}{HTML}{D55E00}\definecolor{kSC}{HTML}{0072B2}%
\definecolor{kPZ}{HTML}{009E73}\definecolor{kAM}{HTML}{CC79A7}%
\providecommand{\ablleg}[2]{\tikz[baseline=-0.5ex]{\draw[#1,line width=1.3pt](0,0)--(0.42,0);}~#2\hspace{1.0em}}
\begin{tikzpicture}
\begin{groupplot}[group style={group size=4 by 1, horizontal sep=0.95cm, yticklabels at=edge left},
  width=4.6cm, height=3.7cm, tick label style={font=\tiny}, title style={font=\scriptsize\bfseries,yshift=-3pt},
  xlabel style={font=\scriptsize,yshift=3pt}, ylabel style={font=\scriptsize,yshift=-3pt},
  grid=major, grid style={gray!20,line width=0.3pt}, axis line style={gray!60},
  every tick/.style={gray!60}, error bars/error bar style={line width=0.6pt},
  xmode=log, log basis x=2, xtick={1,4,8,16}, xticklabels={1,4,8,16}, log ticks with fixed point,
  xlabel={budget $\varepsilon_{\mathrm{TR}}$}, ymin=-0.04, ymax=1.05, ytick={0,0.25,0.5,0.75,1}]
\nextgroupplot[title={Pure tilt (offline)}, ylabel={success}]
  \addplot[kC2,mark=*,mark size=1.3pt,line width=0.9pt,error bars/.cd,y dir=both,y explicit] table[x=x,y=y,y error=yerr]{tikz_data/abl_klswoff_c2_pure.dat};
  \addplot[kSC,mark=square*,mark size=1.3pt,line width=0.9pt,error bars/.cd,y dir=both,y explicit] table[x=x,y=y,y error=yerr]{tikz_data/abl_klswoff_sc_pure.dat};
  \addplot[kPZ,mark=triangle*,mark size=1.3pt,line width=0.9pt,error bars/.cd,y dir=both,y explicit] table[x=x,y=y,y error=yerr]{tikz_data/abl_klswoff_pz_pure.dat};
  \addplot[kAM,mark=diamond*,mark size=1.3pt,line width=0.9pt,error bars/.cd,y dir=both,y explicit] table[x=x,y=y,y error=yerr]{tikz_data/abl_klswoff_am_pure.dat};
\nextgroupplot[title={Product tilt (offline)}]
  \addplot[kC2,mark=*,mark size=1.3pt,line width=0.9pt,error bars/.cd,y dir=both,y explicit] table[x=x,y=y,y error=yerr]{tikz_data/abl_klswoff_c2_prod.dat};
  \addplot[kSC,mark=square*,mark size=1.3pt,line width=0.9pt,error bars/.cd,y dir=both,y explicit] table[x=x,y=y,y error=yerr]{tikz_data/abl_klswoff_sc_prod.dat};
  \addplot[kPZ,mark=triangle*,mark size=1.3pt,line width=0.9pt,error bars/.cd,y dir=both,y explicit] table[x=x,y=y,y error=yerr]{tikz_data/abl_klswoff_pz_prod.dat};
  \addplot[kAM,mark=diamond*,mark size=1.3pt,line width=0.9pt,error bars/.cd,y dir=both,y explicit] table[x=x,y=y,y error=yerr]{tikz_data/abl_klswoff_am_prod.dat};
\nextgroupplot[title={Pure tilt (online)}]
  \addplot[kC2,mark=*,mark size=1.3pt,line width=0.9pt,error bars/.cd,y dir=both,y explicit] table[x=x,y=y,y error=yerr]{tikz_data/abl_klsw_c2_pure.dat};
  \addplot[kSC,mark=square*,mark size=1.3pt,line width=0.9pt,error bars/.cd,y dir=both,y explicit] table[x=x,y=y,y error=yerr]{tikz_data/abl_klsw_sc_pure.dat};
  \addplot[kPZ,mark=triangle*,mark size=1.3pt,line width=0.9pt,error bars/.cd,y dir=both,y explicit] table[x=x,y=y,y error=yerr]{tikz_data/abl_klsw_pz_pure.dat};
  \addplot[kAM,mark=diamond*,mark size=1.3pt,line width=0.9pt,error bars/.cd,y dir=both,y explicit] table[x=x,y=y,y error=yerr]{tikz_data/abl_klsw_am_pure.dat};
\nextgroupplot[title={Product tilt (online)}]
  \addplot[kC2,mark=*,mark size=1.3pt,line width=0.9pt,error bars/.cd,y dir=both,y explicit] table[x=x,y=y,y error=yerr]{tikz_data/abl_klsw_c2_prod.dat};
  \addplot[kSC,mark=square*,mark size=1.3pt,line width=0.9pt,error bars/.cd,y dir=both,y explicit] table[x=x,y=y,y error=yerr]{tikz_data/abl_klsw_sc_prod.dat};
  \addplot[kPZ,mark=triangle*,mark size=1.3pt,line width=0.9pt,error bars/.cd,y dir=both,y explicit] table[x=x,y=y,y error=yerr]{tikz_data/abl_klsw_pz_prod.dat};
  \addplot[kAM,mark=diamond*,mark size=1.3pt,line width=0.9pt,error bars/.cd,y dir=both,y explicit] table[x=x,y=y,y error=yerr]{tikz_data/abl_klsw_am_prod.dat};
\end{groupplot}
\node[anchor=north, font=\scriptsize, inner sep=0pt] at ($(current bounding box.south)+(0,-0.22cm)$)
  {\ablleg{kC2}{cube-double}\ablleg{kSC}{scene}\ablleg{kPZ}{puzzle-3x3}\ablleg{kAM}{antmaze-large}};
\end{tikzpicture}}
  \caption{\textbf{Trust-region budget sweep.} Success against $\varepsilon_{\mathrm{TR}}$ on four
  tasks (mean $\pm$ std over three seeds), offline (left two panels) and online (right two
  panels), for the pure tilt (first and third panel) and the product tilt (second and fourth
  panel). With the offline trust region off, the product tilt scores $0.41$, $0.94$, $1.00$, and
  $0.81$ on \texttt{cube-double}, \texttt{scene}, \texttt{puzzle-3x3}, and \texttt{antmaze-large}.}
  \label{fig:abl-klsweep}
\end{figure}

\end{document}